\colorlet{pink}{red!40}
\colorlet{blue}{cyan!60}
\newcommand{\matrixSym}[1]{\ensuremath{\mathbf{\uppercase{#1}}}}
\newcommand\footnoteref[1]{\protected@xdef\@thefnmark{\ref{#1}}\@footnotemark}
\newcommand{\RN}[1]{%
	\textup{\uppercase\expandafter{\romannumeral#1}}%
}
\newtheorem{definition}{Definition}
\newtheorem{theorem}{Theorem}
\begin{document}

\sloppy

\citestyle{acmnumeric}

\title{Incorporating User's Preference into Attributed Graph Clustering}

\author{Wei Ye$^1$, Dominik Mautz$^2$, Christian B\"{o}hm$^2$, Ambuj Singh$^1$,Claudia Plant$^3$}
\affiliation{%
	\institution{$^1$University of California, Santa Barbara}
}
\affiliation{%
	\institution{$^2$Ludwig-Maximilians-Universit\"{a}t M\"{u}nchen, Munich, Germany}
}
\affiliation{%
	\institution{$^3$University of Vienna, Vienna, Austria}
}
\email{{weiye, ambuj}@cs.ucsb.edu}
\email{{mautz, boehm}@dbs.ifi.lmu.de}
\email{claudia.plant@univie.ac.at}

\renewcommand{\shortauthors}{We Ye et al.}

\begin{abstract}
	Graph clustering has been studied extensively on both plain graphs and attributed graphs. However, all these methods need to partition the whole graph to find cluster structures. Sometimes, based on domain knowledge, people may have information about a specific target region in the graph and only want to find a single cluster concentrated on this local region. Such a task is called local clustering. In contrast to global clustering, local clustering aims to find only one cluster that is concentrating on the given seed vertex (and also on the designated attributes for attributed graphs). Currently, very few methods can deal with this kind of task. To this end, we propose two quality measures for a local cluster: Graph Unimodality (\textsc{GU}) and Attribute Unimodality (\textsc{AU}). The former measures the homogeneity of the graph structure while the latter measures the homogeneity of the subspace that is composed of the designated attributes. We call their linear combination as \textsc{Compactness}. Further, we propose LOCLU to optimize the \textsc{Compactness} score. The local cluster detected by LOCLU concentrates on the region of interest, provides efficient information flow in the graph and exhibits a unimodal data distribution in the subspace of the designated attributes. 
\end{abstract}
	
\keywords{Local clustering, user's preference, attributed graphs, dip test, unimodal, NCut, power iteration.}
		
\maketitle

\section{Introduction}\label{intro}
Data can be collected from multiple sources and modeled as attributed graphs (networks), in which vertices represent entities, edges represent their relations and attributes describe their own characteristics. For example, proteins in a protein-protein interaction network may be associated with gene expressions in addition to their interaction relations; users in a social network may be associated with individual attributes such as interests, residence and demographics in addition to their friendship relations. 

One of the major data mining tasks in graphs (networks) is the detection of clusters. Existing methods for cluster detection in attributed graphs can be divided into two categories, i.e., full space attributed graph clustering methods \cite{akoglu2012pics, DBLP:journals/pvldb/ZhouCY09} and subspace attributed graph clustering methods~\cite{DBLP:conf/icdm/GunnemannFRS13, chen2017generic, ye2017attributed}. The methods belonging to the first category treat all attributes equally important to the graph structure, while the methods belonging to the second category consider varying relevance of attributes to the graph structure. All these methods need to partition the whole graph to find cluster structures. However, based on domain knowledge, sometimes people may have information about a specific target region in the graph and are only interested in finding a cluster surrounding this local region. Such a task is called local cluster detection, which has aroused a great deal of attention in many applications, e.g., targeted ads, medicine, etc. Without considering scalability, one may think we can first use full space or subspace attributed graph clustering techniques and then return the cluster that contains the target region. However, it is hard to set the number of clusters in real-world graphs. And the cluster content depends on the chosen number of clusters. 

To deal with this task, several recent works~\cite{andersen2006communities, leskovec2008statistical} use short random walks starting from the target region to find the local cluster. Also, some approaches \cite{andersen2006local, kloster2014heat} focus on using the graph diffusion methods to find the local cluster. However, these methods are only suitable for detecting local clusters in plain graphs whose vertices have no attributes. Recently, FocusCO~\cite{DBLP:conf/kdd/PerozziASM14} has been proposed to find a local cluster of interest to users in attributed graphs. Given an examplar set, it first exploits a metric learning method to learn a projection vector that makes the vertex in the examplar set similar to each other in the projected attribute subspace, then updates the graph weight and finally performs the focused cluster extraction. FocusCO cannot infer the projection vector if the examplar set has only one vertex.

In this paper, given user's preference, i.e., the seed vertex and the designated attributes, we develop a method that can automatically find the vertices that are similar to the given seed vertex. The similarity is measured by the homogeneity both in the graph structure and the subspace that is composed of the designated attributes. To this end, we first propose \textsc{Compactness} to measure the unimodality\footnote{In this work, unimodality/unimodal and homogeneity/homogeneous can be used interchangeably.} of the clusters in attributed graphs.  \textsc{Compactness} is composed of two measures: Graph Unimodality (\textsc{GU}) and Attribute Unimodality (\textsc{AU}). \textsc{GU} measures the unimodality of the graph structure, and \textsc{AU} measures the unimodality of the subspace that is composed of the designated attributes. To consider both the graph structure and attributes, we first embed the graph structure into vector space. Then we consider the graph embedding vector as another designated attribute and apply the local clustering technique separately on each designated attribute. We call the procedure to find a local cluster as LOCLU.

Let us use a simple example to demonstrate our motivation. Figure~\ref{fig:introexample} shows an example social network, in which the vertices represent students in a middle school, the edges represent their friendship relations, and the attributes associated to each vertex are age (year), sport time (hour) per week, studying time (hour) per week and playing mobile game (M.G.) time (hour) per week. Given vertex 4 and the designated attribute M.G., the task is to find a local cluster around the vertex 4. (This task is of interest to mobile game producers.) Conventional diffusion-based local clustering method such as HK \cite{kloster2014heat} finds a cluster $\mathcal{C}_1=\{1, 3, 4, 5, 6, 7\}$. However, this cluster is not homogeneous in the subspace of the M.G. attribute. Compared with $\mathcal{C}_1$, the cluster $\mathcal{C}_2=\{1, 2, 3, 4\}$ is more local, which is concentrated on the vertex 4 and the M.G. attribute.
\begin{figure}[!htp]
	\centering
	\includegraphics[width=0.45\textwidth]{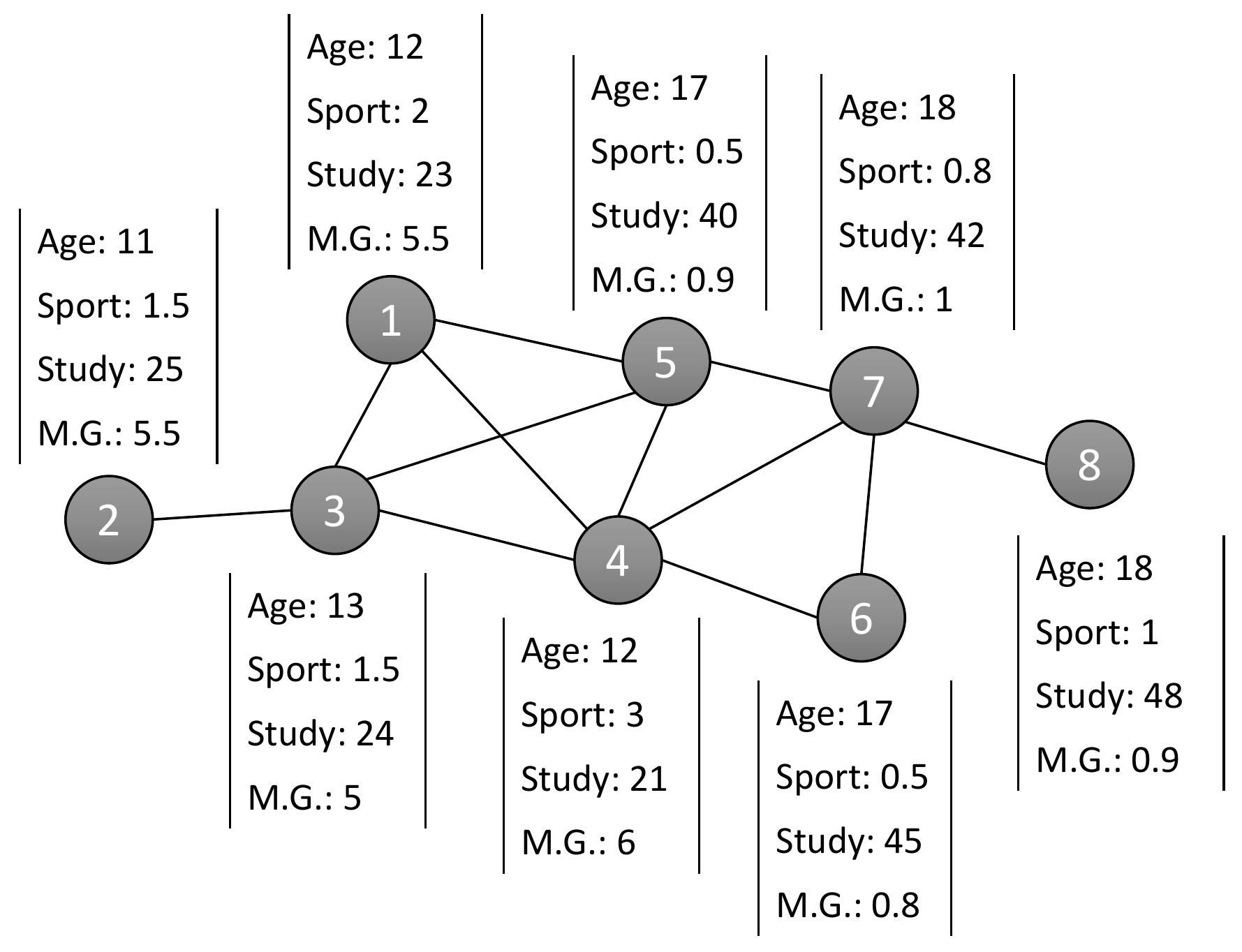}
	\caption{An example social network.}
	\label{fig:introexample}
\end{figure}

The main contributions are as follows:
\begin{itemize}
	\item We introduce the univariate statistic hypothesis test called Hartigans' dip test~\cite{hartigan1985dip} to a new user-centric problem setting: incorporating user's preference into attributed graph clustering.
	\item We propose \textsc{Compactness}, a new quality measure for clusters in attributed graphs. \textsc{Compactness} measures the homogeneity (unimodality) of both the graph structure and subspace that is composed of the designated attributes.
	\item We propose LOCLU to optimize the \textsc{Compactness} score.
	\item We demonstrate the effectiveness and efficiency of LOCLU by conducting experiments on both synthetic and real-world attributed graphs.
\end{itemize}

\section{Preliminaries}
\subsection{Notation}\label{sec:notation}
In this work, we use lower-case Roman letters (e.g.\ $a,b$) to denote scalars. We denote vectors (column) by boldface lower case letters (e.g.\ $\mathbf{x}$) and denote its $i$-th element by $\mathbf{x}(i)$. Matrices are denoted by boldface upper case letters (e.g.\ $\matrixSym{X}$). We denote entries in a matrix by non-bold lower case letters, such as $x_{i j}$. Row $i$ of matrix $\matrixSym{X}$ is denoted by $\mathbf{X}(i, :)$, column $j$ by $\mathbf{X}(:, j)$. A set is denoted by calligraphic capital letters (e.g. $\mathcal{S}$). An undirected attributed graph is denoted by $\mathsf{G}=(\mathcal{V},\mathcal{E}, \matrixSym{X})$, where $\mathcal{V}$ is a set of graph vertices with number $n=|\mathcal{V}|$ of vertices, $\mathcal{E}$ is a set of graph edges with number $e=|\mathcal{E}|$ of edges and $\matrixSym{X}\in \mathbb{R}^{n \times d}$ is a data matrix of attributes associated to vertices, where $d$ is the number of attributes. An adjacency matrix of vertices is denoted by $\mathbf{A}\in \mathbb{R}^{n\times n}$ with $a_{i j}=1 (i\neq j)$ and $a_{i j}=0 (i=j)$. The degree matrix $\mathbf{D}$ is a diagonal matrix associated with $\mathbf{A}$ with $d_{i i}=\sum_j a_{i j}$.  The random walk transition matrix $\mathbf{W}$ is defined as $\mathbf{D}^{-1}\mathbf{A}$. The Laplacian matrix is denoted as $\mathbf{L}=\mathbf{I}-\mathbf{W}$, where $\mathbf{I}$ is the identity matrix. An attributed graph cluster is a subset of vertices $\mathcal{C}\subseteq \mathcal{V}$ with attributes. The indicator function is denoted by $\mathbbm{1}(x)$.

\subsection{The Dip Test}
Before introducing the concept of the dip test, let us first clarify the definitions of unimodal distribution and multimodal distribution. In statistics, a unimodal distribution refers to a probability distribution that only has a single mode (i.e. peak). If a probability distribution has multiple modes, it is called multimodal distribution. From the behavior of the cumulative distribution function (CDF), unimodal distribution can also be defined as: if the CDF is convex for $x<m$ and concave for $x>m$ ($m$ is the mode), then the distribution is unimodal.

Now let us introduce a univariate statistic hypothesis test which is called Hartigans' dip test~\cite{hartigan1985dip} as follows:
 \begin{theorem}{~\cite{hartigan1985dip}}
 	Let $F(x)$ be a distribution function. Then $D(F)=2h$ ($h$ is the dip test value) only if there exists a nondecreasing function $G(x)$ such that for some $x_l\leq x_u$:
 	\begin{itemize}
 		\item $G(x)$ is the greatest convex minorant (g.c.m.) of $F(x)+h$ in $(-\infty, x_l)$.
 		
 		\item $G(x)$ has constant maximum slope in $[x_l, x_u]$ (modal interval).
 		
 		\item $G(x)$ is the least concave majorant (l.c.m.) of $F(x)-h$ in $(x_u, \infty)$.
 		
 		\item $h=\sup_{x\notin[x_l, x_u]}| F(x)-G(x)|\geq\sup_{x\in[x_l, x_u]}| F(x)-G(x)|$.
 	\end{itemize}
 \end{theorem}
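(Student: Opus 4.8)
The plan is to prove the stated necessary condition by realising $G$ as a best unimodal approximant of $F$ and then reading off its shape. Here $D(F)=2h$ encodes the fact that $h$ is the least sup-distance from $F$ to the class $\mathcal U$ of unimodal functions, i.e.\ nondecreasing functions that are convex to the left of some interval, linear on it, and concave to the right; so $h=\inf_{U\in\mathcal U}\|F-U\|_\infty$. First I would show this infimum is attained. For a minimising sequence $U_k$, Helly's selection theorem gives a subsequence converging pointwise (at continuity points) to a nondecreasing limit $G$; since pointwise limits of convex (resp.\ concave) functions are convex (resp.\ concave), and passing to a further subsequence along which the modal points converge in $[-\infty,+\infty]$, $G$ is again convex-then-concave. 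Boundedness of $\|F-U_k\|_\infty$ together with $F$ being a genuine distribution function prevents escape of mass to $\pm\infty$, so $\|F-G\|_\infty=h$. Note that $G$ need not itself be a distribution function---its limits at $\pm\infty$ lie only in $[-h,h]$ and $[1-h,1+h]$---which is precisely why the statement asks only for a nondecreasing $G$.

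Next I would extract the structure. Let $[x_l,x_u]$ be the maximal interval on which $G$ attains its largest slope, so that $G$ is convex on $(-\infty,x_u]$, linear on $[x_l,x_u]$, and concave on $[x_l,\infty)$. On $(-\infty,x_l)$, convexity of $G$ together with $G\le F+h$ (from $\|F-G\|_\infty\le h$) gives $G\le\mathrm{gcm}(F+h)$; conversely $\mathrm{gcm}(F+h)=\mathrm{gcm}(F)+h\ge F-h$ on $(-\infty,x_l)$, since otherwise $\sup_{x<x_l}(F-\mathrm{gcm}(F))>2h$, and then no convex function could stay within $h$ of $F$ on $(-\infty,x_l)$, contradicting the existence of $G$. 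Hence $G$ may be taken equal to $\mathrm{gcm}(F+h)$ on $(-\infty,x_l)$ without leaving the band $[F-h,F+h]$, and the mirror-image argument forces $G=\mathrm{lcm}(F-h)$ on $(x_u,\infty)$; on the modal interval $G$ is already linear with $|F-G|\le h$. These three pieces are the first three bullets of the statement.

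The hard part will be to check that these pieces actually glue into a single genuinely convex-linear-concave $G$ and that the extreme deviation is attained off the modal interval---that is, that the slope of $\mathrm{gcm}(F+h)$ at $x_l^-$ and of $\mathrm{lcm}(F-h)$ at $x_u^+$ do not exceed the modal slope, and that $\sup_{x\notin[x_l,x_u]}|F-G|=h\ge\sup_{x\in[x_l,x_u]}|F-G|$ (the fourth bullet). The route is to fix the modal interval optimally---as the set of abscissae where one straight line is simultaneously tangent to $\mathrm{gcm}(F+h)$ on the left and to $\mathrm{lcm}(F-h)$ on the right---and then run an exchange argument: a junction slope exceeding the modal slope, or a deviation that is strictly largest strictly inside $[x_l,x_u]$, would allow one to shorten or shift the modal line (equivalently, re-solve the g.c.m./l.c.m.\ construction for a slightly smaller $h$) and produce a unimodal function strictly closer than $h$, contradicting $h=\inf_{U\in\mathcal U}\|F-U\|_\infty$; the degenerate case $x_l=x_u$ (a modal point rather than interval) must be tracked throughout. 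The attainment step and the envelope identifications are routine by comparison, essentially following Hartigan \& Hartigan; it is this simultaneous control of the junction slopes and of where the worst deviation sits that is the crux. Finally, since the statement is phrased only as a necessary condition (``only if''), no converse is required.
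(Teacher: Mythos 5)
First, a point of reference: the paper does not prove this statement at all. It is Theorem~1 of Hartigan and Hartigan's 1985 paper, quoted with a citation, and the surrounding text only supplies the definitions of g.c.m.\ and l.c.m.\ plus a pictorial explanation. So your sketch has to be judged against the original source rather than against anything in this paper.

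Measured that way, there is one genuine gap at the very first step. You set $h=\inf_{U\in\mathcal U}\|F-U\|_\infty$ where $\mathcal U$ is the class of \emph{nondecreasing convex--linear--concave functions}, and treat ``$D(F)=2h$'' as mere bookkeeping. But in Hartigan--Hartigan, $D(F)$ is the sup-distance from $F$ to the class of unimodal \emph{distribution functions}, and the identity $D(F)=2h$ --- that this distance is exactly twice the least half-width $h$ of a band $[F-h,\,F+h]$ admitting a convex--linear--concave nondecreasing selection --- is a substantive part of the theorem, not a definition. The conversion requires an argument in both directions: from a band of half-width $h$ one manufactures a genuine unimodal d.f.\ within $2h$ of $F$ by shifting the convex branch of $G$ down by $h$ (so it tends to $0$ and stays in $[F-2h,\,F]$), shifting the concave branch up by $h$ (so it tends to $1$), and splicing the two along the linear modal piece without breaking convexity--concavity; and conversely a unimodal d.f.\ within $2h$ of $F$ yields a shape-constrained nondecreasing function within $h$. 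Your own observation that $G$'s limits at $\pm\infty$ lie only in $[-h,h]$ and $[1-h,1+h]$ is precisely the place where this conversion is needed, yet the sketch never performs it --- the factor of two is absorbed into your choice of $\mathcal U$ and thereby assumed rather than proved.

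The rest of the outline (attainment via Helly selection, $G\le\mathrm{gcm}(F+h)$ by convexity and minorance, the $2h$-band argument showing $\mathrm{gcm}(F+h)\ge F-h$ on $(-\infty,x_l)$, and the mirror statement on $(x_u,\infty)$) is sound. The second gap you at least name yourself: the gluing of the three pieces into a single convex--linear--concave function with matching junction slopes, and the verification that the extreme deviation $h$ is attained off the modal interval (the fourth bullet), are deferred to an unexecuted exchange argument. The exchange idea is the right one, but as written the proposal establishes neither the key identity $D(F)=2h$ nor bullet four, so it is an outline of the correct strategy rather than a proof.
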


 The g.c.m. of $F(x)$ in $(-\infty,x_l]$ is $\sup(L(x))$ for $x\leq x_l$, where the $\sup(\cdot)$ is taken over all functions $L(x)$ that are convex in $(-\infty,x_l]$ and nowhere greater than $F(x)$. The l.c.m. of $F(x)$ in $[x_u, \infty)$ is $\inf(L(x))$ for $x\geq x_u$, where the $\inf(\cdot)$ is taken over all functions $L(x)$ that are concave in $[x_u, \infty)$ and nowhere less than $F(x)$.

The dip test is the infimum among the supremum computed between the cumulative distribution function (CDF) of $F(x)$ and the CDF of $G(x)$ from the set of unimodal distributions. The dip test measures the departure of $F(x)$ from unimodality. As pointed out in~\cite{hartigan1985dip}, the class of uniform distributions is the most suitable for the null hypothesis, because their dip test values are stochastically larger than those of other unimodal distributions. Note that the higher the dip test value, the more multimodal the distribution. Also note that the dip test value is in the range $[0, 0.25)$~\cite{maurus2016skinny}.

 Let us use Figure~\ref{fig:demo} to demonstrate the main idea behind the dip test. Figure~\ref{fig:demo}(a) shows the histogram of the $x$-axis projection of the data shown in Figure~\ref{fig:example}(a). The blue curve ($F(x)$) in Figure~\ref{fig:demo}(b) is the CDF of the $x$-axis projection of the data. Note that the histogram is for visual comparison only. The dip test only needs $F(x)$. To measure the unimodality of $F(x)$, the dip test tries to fit a piecewise-linear function $G(x)$ onto it. Then, the twice of the dip test value $2h$ is defined as the maximum achievable vertical offset for two copies of $F(x)$ (the red and magenta curves, i.e., $F(x)+h$ and $F(x)-h$ in Figure~\ref{fig:demo}(b)) such that $G(x)$ does not violate its unimodal rules (convex up to the modal interval (the shade area in Figure \ref{fig:demo}(b)) and concave after it). The farther $F(x)$ strays from unimodality, the larger the required offset between the two copies of $F(x)$. For more details, please refer to~\cite{hartigan1985dip,krause2005multimodal,maurus2016skinny}.
\begin{figure}[!htb]
	\centering
	\begin{subfigure}{.23\textwidth}
		\includegraphics[width=\textwidth]{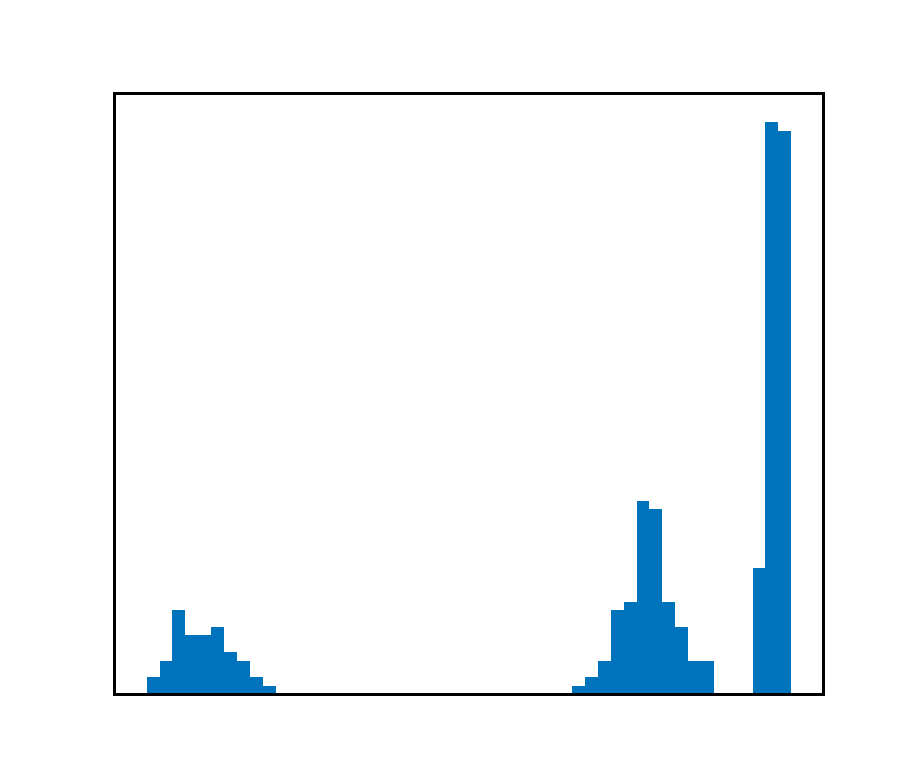}
		\caption{Histogram.} 
	\end{subfigure}
	\centering
	\begin{subfigure}{.23\textwidth}
		\includegraphics[width=\textwidth]{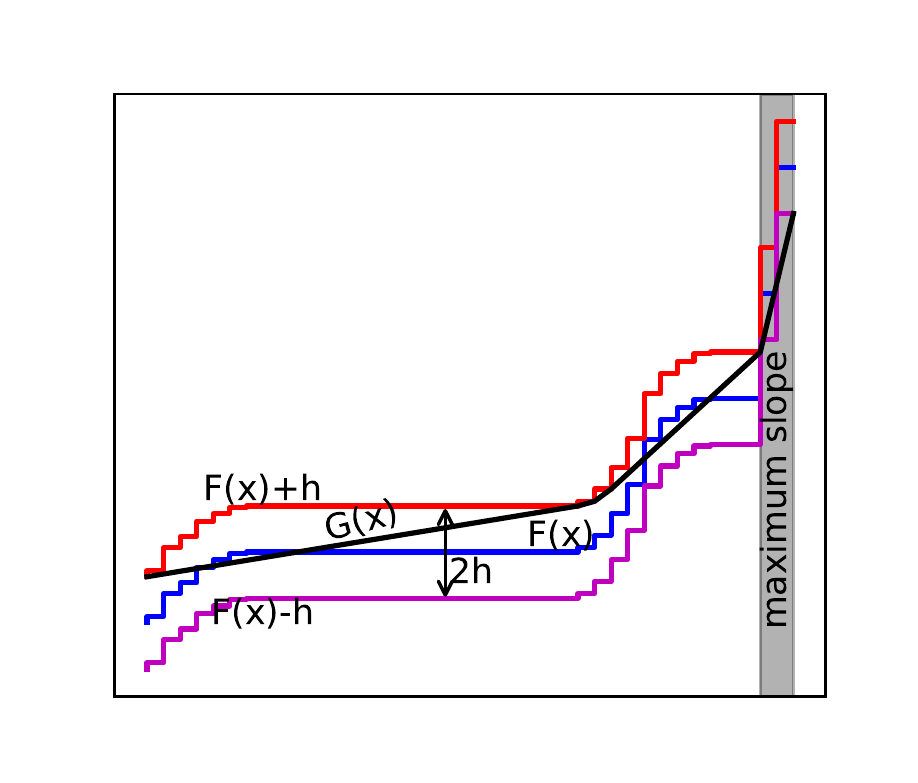}
		\caption{Dip solution.} 
	\end{subfigure}
	\caption{The demonstration of the dip test.}
	\label{fig:demo}
\end{figure}

The $p$-value for the dip test is then computed by comparing $D(F(x))$ with $D(G(x)^{(q)})$ $b$ times, each time with a different $n$ observations from $G(x)$, and the proportion $\sum_{1\leq q\leq b}\mathbbm{1}(D(F(x))\leq D(G(x)^{(q)}))/b$ is the $p$-value. If the $p$-value is greater than a significance level $\alpha$, say 0.05, the null hypothesis $H_0$ that $F(x)$ is unimodal is accepted. Otherwise $H_0$ is rejected in favor of the alternative hypothesis $H_1$ that $F(x)$ is multimodal.

\section{Method LOCLU}
\subsection{Objective Function}
The problem of incorporating user's preference into attributed graph clustering can be defined as follows:
\newtheorem*{problemLOCLU}{\textbf{Incorporating User's Preference into Attributed Graph Clustering}}
\begin{problemLOCLU}
	Given an attributed graph $\mathsf{G}=(\mathcal{V},\mathcal{E}, \matrixSym{X})$, a seed vertex $v_q$, and the indexes of the designated attributes $\mathcal{I}=\{a_1,a_2,\cdots,a_u\}$, find a cluster $\mathcal{C}=\{v_1,v_2,\cdots,v_q,\cdots\}$ around the seed vertex $v_q$, such that the cluster $\mathcal{C}$ is not only unimodal in the graph structure but also in the subspace that is composed of the designated attributes.
\end{problemLOCLU}

In order to find the local cluster that satisfies the definition, we need to consider the information from both the graph structure and attributes. Firstly, we consider the graph structure. We propose Graph Unimodality (\textsc{GU}) to measure the unimodality of the graph structure.
\begin{definition}[Graph Unimodality]
	For a local cluster $\mathcal{C}=\{v_1,v_2,\cdots,v_q,\cdots\}$ around the given seed vertex $v_q$, its graph unimodality is defined as:
	\begin{equation}
	\label{equ:gu}
	\textsc{GU}(\mathcal{C}) = \frac{1}{r}\sum_{i=1}^rD\left(F\left(\mathbf{e}_i(\mathcal{C})\right)\right)
	\end{equation}
	where $r$ is the dimension of the graph embedding $\matrixSym{E}$, $\mathbf{e}_i$ is the $i$-th embedding vector, and $F\left(\mathbf{e}_i(\mathcal{C})\right)$ is the CDF of $\mathbf{e}_i(\mathcal{C})$.
\end{definition}

Graph Unimodality (\textsc{GU}) measures the unimodality of the graph structure in a detected local cluster. The lower the GU value, the more unimodal a local cluster in the graph structure. We use the spectral embedding method, especially normalized cut (NCut)~\cite{shi2000normalized}, to find the embedding $\matrixSym{E}$ of the graph structure. The definition of the NCut is:

\begin{equation}
\label{eqn:Ncut}
\mbox{NCut}(\mathcal{C})=\frac{\mbox{cut}(\mathcal{C},\overline{\mathcal{C}})}{\mbox{vol}(\mathcal{C})}
\end{equation}
where $\mbox{cut}(\mathcal{C},\overline{\mathcal{C}})=\sum_{v_i\in\mathcal{C},v_j\in\overline{\mathcal{C}}}a_{i j}$ and $\mbox{vol}(\mathcal{C})=\sum_{v_i\in\mathcal{C},v_j\in\mathcal{V}}a_{i j}$.

Equation (~\ref{eqn:Ncut}) can be equivalently rewritten as (for a more detailed explanation, please refer to~\cite{von2007tutorial}):
\begin{equation}
\label{eqn:relaxedNcut}
\begin{aligned}
&\mbox{NCut}(\mathcal{C})=\mathbf{e}^\intercal\mathbf{L}\mathbf{e} \\ 
s.t.  \; &\mathbf{e}^\intercal\mathbf{D}\mathbf{e}=\mbox{vol}(\mathsf{G})\\
&\mathbf{D}\mathbf{e}\perp\textbf{1}
\end{aligned}
\end{equation}
where $\mathbf{e}$ is the cluster indicator vector (embedding vector) and $\mathbf{e}^\intercal\mathbf{L}\mathbf{e}$ is the cost of the cut and $\textbf{1}$ is a constant vector whose entries are all 1. Note that finding the optimal solution is known to be NP-hard \cite{wagner1993between} when the values of $\mathbf{e}$ are constrained to $\{1,-1\}$. But if we relax the objective function to allow it take values in $\mathbb{R}$, a near optimal partition of the graph $\mathsf{G}$ can be derived from the eigenvector having the second smallest eigenvalue of $\mathbf{L}$. More generally, embedding $\matrixSym{E}$ that is composed of $k$ eigenvectors with the $k$ smallest eigenvalues partition the graph into $k$ subgraphs with near optimal normalized cut value.

Secondly, we consider the attributes. We propose Attribute Unimodality (\textsc{AU}) to measure the unimodality of the subspace that is composed of the designated attributes.
\begin{definition}[Attribute Unimodality]
	For a local cluster $\mathcal{C}=\{v_1,v_2,\cdots,v_q,\cdots\}$ around the given seed vertex $v_q$, its attribute unimodality is defined as:
	\begin{equation}
	\label{equ:au}
	\textsc{AU}(\mathcal{C}) = \frac{1}{u}\sum_{i=1}^uD\left(F\left(\mathbf{x}_i(\mathcal{C})\right)\right)
	\end{equation}
	where $u$ is the number of the designated attributes, $\mathbf{x}_i$ is the $a_i$-th designated attribute, and $F\left(\mathbf{x}_i(\mathcal{C})\right)$ is the CDF of $\mathbf{x}_i(\mathcal{C})$.
\end{definition}

Attribute Unimodality (\textsc{AU}) measures the unimodality of the subspace that is composed of the designated attributes in a detected local cluster. The lower the AU value, the more unimodal a local cluster in the subspace. 

To measure the unimodality of both the graph structure and attributes of a local cluster, our objective function integrates both \textsc{GU} and \textsc{AU} into one framework, which is called \textsc{Compactness}:
\begin{equation}
\label{equ:obj}
\textsc{Compactness}(\mathcal{C}) = \mbox{GU}(\mathcal{C}) + \mbox{AU}(\mathcal{C})
\end{equation}

In the following, we will elaborate the optimization method LOCLU. It employs a dip test based local clustering technique on the embedding of the graph structure and the designated attributes. The detected local cluster is unimodal both in the graph structure and the designated attributes.

\subsection{Optimizing Attribute Unimodality}\label{sec:unimodal_clustering}
There are many clustering techniques for the numerical attributes, e.g., $k$-means, EM, DBSCAN~\cite{ester1996density}, etc. One possible idea is inputing proper parameters (such as the number of clusters for $k$-means, and MinPts and $\epsilon$ for DBSCAN) to the clustering techniques and letting them return the cluster that includes the given seed vertex. However, the disadvantage is that the parameters are difficult to set. The cluster assignments change with different numbers of clusters. Moreover, many clustering techniques need to partition the whole dataset, which is very time- and resource-consuming. Alternatively, we can consider the attributes and graph structure simultaneously and apply some attributed graph clustering technique. The problems we face are the same as described above. In this paper, we would like to develop a local clustering technique. The technique  does not require the number of clusters $k$, which is difficult to set in the real world datasets.

Note that the dip test returns a modal interval, in which the distribution of data is unimodal. Our idea is to employ the modal interval to find a local cluster around the given seed vertex. Our perspective is that data points in the modal interval belong to one cluster. We cluster vertices according to their positions and the position of the modal interval. Thus, other statistical tests that do not return the modal interval cannot be adopted. We use Figure~\ref{fig:example} to elaborate the main idea of our local clustering technique. Assume that Figure~\ref{fig:example}(a) shows two numerical attributes $x$ and $y$ associated with a local graph cluster. The two attributes have three clusters inside. The purple and blue clusters follow Gaussian distributions. The orange cluster follows a uniform distribution. The big blue dot in the blue cluster reveals the numerical values of the given seed vertex and we want to find a local cluster concentrating on this big blue dot and the $x$-axis. 

We input the $x$-axis projection of the data into the dip test and it returns a $p$-value and a modal interval $[x_l^{(1)}, x_u^{(1)}]$. In this case, the $p$-value is 0 that is below the significance level $\alpha=0.05$, which means the $x$-axis projection of the data is multimodal. If the given seed point is on the left side of $x_l^{(1)}$, we remove the data points that situate on the right side of $x_l^{(1)}$ and dip over the $x$-axis projection of the remaining data. If the given seed point is on the right side of $x_u^{(1)}$, we remove the data points that situate on the left side of $x_u^{(1)}$ and dip over the $x$-axis projection of the remaining data. Otherwise, we dip over the $x$-axis projection of the data situated in the modal interval $[x_l^{(1)}, x_u^{(1)}]$. We repeat the process until the $x$-axis projection of the remaining data that contains the given seed point is unimodal.

In our case, since $[x_l^{(1)}, x_u^{(1)}]$ does not contain the big blue dot, we remove the data points that situate on the right side of $x_l^{(1)}$ and continue to dip over the $x$-axis projection of the remaining data. The dip test on the $x$-axis projection of the remaining data (shown in Figure \ref{fig:example}(b)) returns a $p$-value of 0, which indicates that the remaining data is still multimodal. Because the given seed point is within the new modal interval $[x_l^{(2)}, x_u^{(2)}]$, we extract the data points situated in this modal interval and dip over their $x$-axis projection. Since the $p$-value returned by the dip test is 0.937 that is greater than the significance level $\alpha=0.05$, which means the $x$-axis projection of the data points (shown in Figure \ref{fig:example}(c)) is unimodal, we terminate the recursive process and return the found local cluster (shown in Figure~\ref{fig:example}(c)).
\begin{figure*}[!htb]
	\centering
	\begin{subfigure}{.3\textwidth}
		\includegraphics[width=\textwidth]{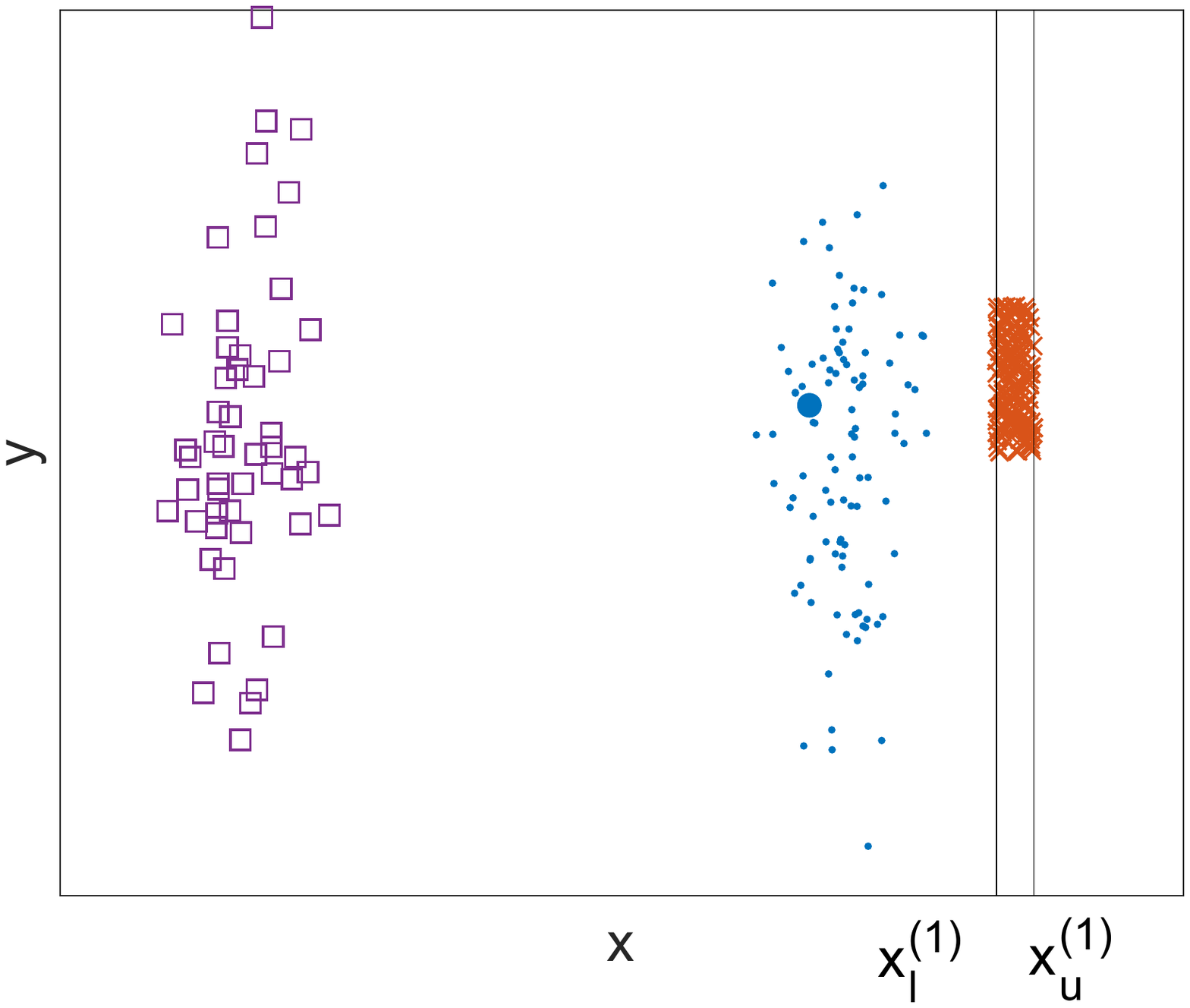}
		\caption{The $x$-axis projection is multimodal with the modal interval $[x_l^{(1)}, x_u^{(1)}]$.} 
	\end{subfigure}
    \hfill
	\centering
	\begin{subfigure}{.3\textwidth}
		\includegraphics[width=\textwidth]{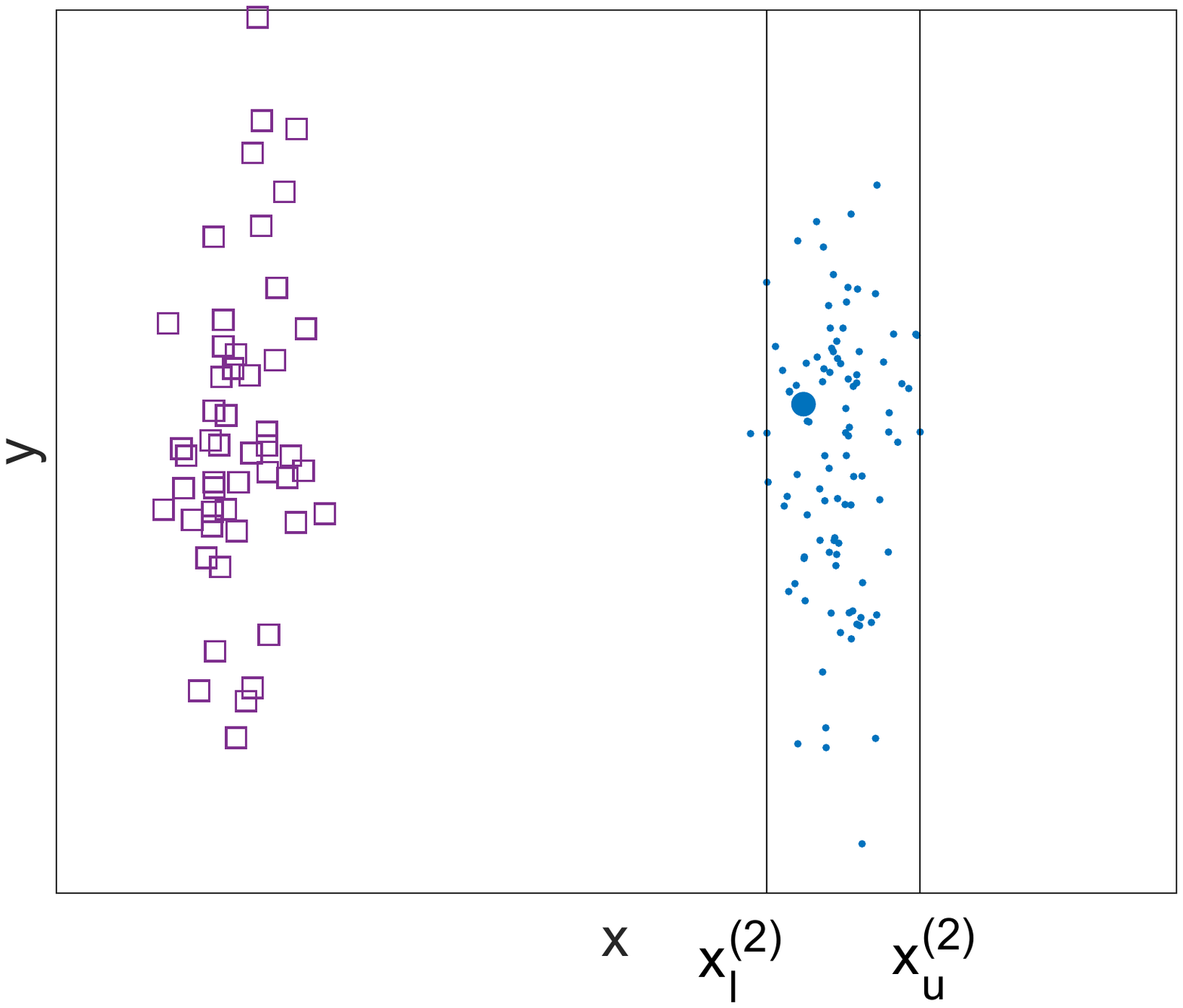}
		\caption{The $x$-axis projection is multimodal with the modal interval $[x_l^{(2)}, x_u^{(2)}]$.} 
	\end{subfigure}
    \hfill
	\centering
	\begin{subfigure}{.3\textwidth}
		\includegraphics[width=\textwidth]{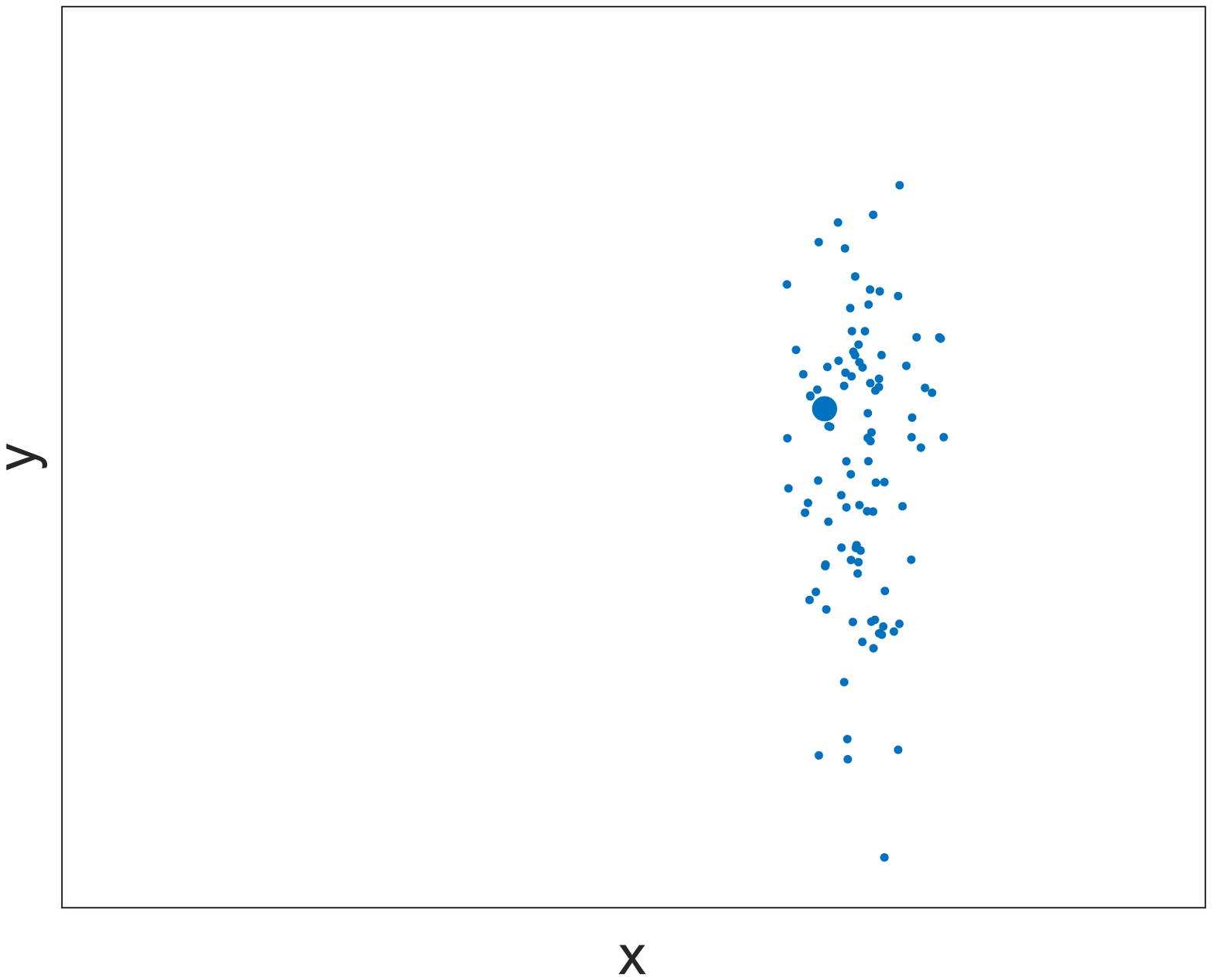}
		\caption{The $x$-axis projection is unimodal and the local clustering stops.} 
	\end{subfigure}
	\caption{The demonstration for the local clustering technique. Assume that the attributes $x$ and $y$ are associated with a local graph cluster. Given the seed vertex (the big blue dot) and the index of the designated attribute ($x$), we want to find a local cluster around the blue dot, which is unimodal in the subspace that is composed of the designated attribute $x$.}
	\label{fig:example}
\end{figure*}

In the above, we recursively dip over the $x$-axis projection of the data to find the local cluster. If given the indexes of attributes $\mathcal{I}=\{a_1,a_2,\cdots,a_u\}$, we will first compute the dip test value of each attribute and then dip over the attributes according to their dip test values, from the highest to the lowest. In this way, the most multimodal attribute will be firstly explored. The insight is that the directions which depart the most from unimodality are promising for clustering.
\begin{theorem}\label{the:uni}
	For a random variable $X=[x_1,x_2,\ldots,x_n]$, the local clustering method will find a unimodal cluster whose values are in the interval $[x_l, x_u]$. The probability distribution function (PDF) of the data points in the interval $[x_i, x_j]$ ($\forall i, j,  x_l\leq x_i<x_j\leq x_u$) is unimodal.
\end{theorem}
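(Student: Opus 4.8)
The plan is to decompose the statement into two claims and establish each in turn. The first claim is that the recursive dip-based procedure terminates and returns a cluster whose attribute values occupy a single interval $[x_l,x_u]$ on which the dip test accepts unimodality. The second is that restricting a unimodal distribution to any subinterval $[x_i,x_j]\subseteq[x_l,x_u]$ of its support again yields a unimodal distribution.

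For the first claim I would show that the cardinality of the working set $S$ handed to successive recursive calls strictly decreases, so the recursion cannot run forever. Each call runs the dip test on $S$ and stops when the returned $p$-value exceeds $\alpha$, so it is enough to prove that whenever a call does not stop, it recurses on a strictly smaller set. If the dip test reports multimodality on $S$, its modal interval $[a,b]$ cannot contain every point of $S$: were that so, $F$ would be convex on $(-\infty,a)$, affine on $[a,b]$ and concave on $(b,\infty)$, hence unimodal, contradicting $D(F)>0$. In each of the three branches of the procedure --- keep the points $\le a$, keep the points $\ge b$, or keep the points lying in $[a,b]$ --- multimodality of $S$ therefore forces at least one point to be dropped, so the next working set has at most $|S|-1$ elements. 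Consequently the recursion halts after at most $n$ steps; in fact it necessarily halts once $|S|\le 3$, because the dip of a sample of that size is below any sensible significance threshold, so termination does not hinge on the particular value of $\alpha$. Writing $S^{\ast}$ for the terminal set and $[x_l,x_u]=[\min S^{\ast},\max S^{\ast}]$, the stopping rule certifies the distribution of $S^{\ast}$ as unimodal, which is the first claim.

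For the second claim I would use the CDF characterization of unimodality recalled in the preliminaries: a distribution is unimodal with mode $m$ iff its CDF is convex on $(-\infty,m)$ and concave on $(m,\infty)$. Fix $x_i<x_j$ with $x_l\le x_i<x_j\le x_u$. The CDF of the data conditioned on $[x_i,x_j]$ is $F_{ij}(x)=\bigl(F(x)-F(x_i)\bigr)/\bigl(F(x_j)-F(x_i)\bigr)$, which is an increasing affine image of $F$ on $[x_i,x_j]$ and so inherits every convex or concave stretch of $F$ on that interval. If $m\in[x_i,x_j]$, then $F_{ij}$ is convex on $[x_i,m)$ and concave on $(m,x_j]$, hence unimodal with mode $m$. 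If $m<x_i$, then $[x_i,x_j]\subseteq(m,\infty)$, so $F_{ij}$ is concave on all of $[x_i,x_j]$; its density is non-increasing, and the distribution is unimodal with mode at the left endpoint $x_i$. Symmetrically, if $m>x_j$, then $F_{ij}$ is convex on $[x_i,x_j]$, its density is non-decreasing, and the mode is $x_j$. In every case $F_{ij}$ meets the CDF definition of unimodality, and since that is equivalent to the density having a single peak, the PDF on $[x_i,x_j]$ is unimodal.

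I expect the main obstacle to be making the termination step airtight, because the halting test is stated through the $p$-value rather than through $D(F)=0$; the fix above is to observe that any sample of at most three points has negligible dip, which forces termination irrespective of $\alpha$. Once that is in place, the subinterval part is essentially bookkeeping over the convex and concave pieces of the CDF, the only delicate point being the convention --- consistent with the CDF-based definition used earlier in the paper --- that a monotone density counts as unimodal with its mode at an endpoint.
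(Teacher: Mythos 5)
Your proof is correct and takes a genuinely more substantive route than the paper's own argument, which disposes of the theorem in two sentences: it asserts, without justification, that multimodality of the data restricted to a subinterval $[x_i,x_j]$ would force multimodality on all of $[x_l,x_u]$, and then concludes by contradiction because the procedure only terminates once the surviving interval passes the dip test. Your second claim is exactly the contrapositive of that unproved assertion, but you supply the missing mechanism: the conditional CDF on a subinterval is an increasing affine image of $F$, hence inherits the convex stretch before the mode and the concave stretch after it, and your three-way case split on the location of $m$ correctly handles the monotone-density cases where the mode sits at an endpoint. That is the content the paper's proof silently presupposes. Your termination argument has no counterpart in the paper's proof at all (the paper only gestures at convergence later, in the complexity analysis, by noting that in the worst case every vertex is its own mode). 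The one soft spot is your justification that a significant dip forces the modal interval to be a proper subset of the working set: an empirical CDF is a step function, so it is never literally affine on the modal interval --- the constant-slope property belongs to the fitted $G$, not to $F$ --- and if the dip routine ever returned a significant $p$-value together with a modal interval covering the whole sample, the third branch of Algorithm 2 would recurse on an unchanged set. Your small-sample fallback mitigates this, but it would be cleaner to state the proper-subset property as an explicit assumption on the dip test rather than to derive it from affineness of $F$. This does not affect the second claim, which stands on its own and is the part the theorem statement actually emphasizes.
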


\begin{proof}
	If the PDF of the data points in the interval $[x_i, x_j]$ ($\forall i, j, x_l\leq x_i<x_j\leq x_u$) is multimodal, the PDF of the data points in the interval $[x_l, x_u]$ should also be multimodal. Thus, the cluster in the interval $[x_l, x_u]$ is not unimodal and the local clustering method will continue dipping over the interval $[x_l, x_u]$ until the PDF of the remaining data points is unimodal.
\end{proof}

\subsection{Optimizing Graph Unimodality}
A good partition of the graph structure is achieved by using the local clustering method in the embedding vector space. In this work, the graph embedding just contains one eigenvector. Each eigenvector bisects the graph into two clusters. We can consider the cluster that contains the seed vertex as the required local cluster. We can first compute the second smallest eigenvector $\mathbf{e}_2$ of the graph Laplacian matrix $\mathbf{L}$. Then, we use the proposed local clustering technique on it to find a local cluster that contains the seed vertex $v_q$. For large-scale graphs, the eigen-decomposition of $\mathbf{L}$ is $\mathcal{O}(n^3)$ which is impractical. Instead, we use the power iteration method~\cite{lin2010power} to compute an approximate eigenvector. 

The power iteration (PI) is a fast method to compute the dominant eigenvector of a matrix. Note that the $k$ largest eigenvector of $\mathbf{W}$ are also the $k$ smallest eigenvector of $\mathbf{L}$. The power iteration method starts with a randomly generated vector $\mathbf{v}^0$ and iteratively updates as follows,
\begin{equation}
\label{fig:update}
\mathbf{v}^t=\frac{\mathbf{W}\mathbf{v}^{t-1}}{\lVert\mathbf{W}\mathbf{v}^{t-1} \rVert_1}
\end{equation}

Suppose $\mathbf{W}$ has eigenvectors (embedding vectors) $\mathbf{E}=[\mathbf{e}_1;\mathbf{e}_2;\ldots;\mathbf{e}_n]$ with eigenvalues $\mathbf{\Lambda}=[\lambda_1,\lambda_2,\ldots,\lambda_n]$, where $\lambda_1=1$ and $\mathbf{e}_1$ is constant. We have $\mathbf{W}\mathbf{E}=\mathbf{\Lambda}\mathbf{E}$ and in general $\mathbf{W}^t\mathbf{E}=\mathbf{\Lambda}^t\mathbf{E}$. When ignoring renormalization, Equation (~\ref{fig:update}) can be written as 
\begin{equation}
\label{fig:updateT}
\begin{split}
\mathbf{v}^t&=\mathbf{W}\mathbf{v}^{t-1}=\mathbf{W}^2\mathbf{v}^{t-2}=\cdots =\mathbf{W}^t\mathbf{v}^0  \\
&=\mathbf{W}^t\left( c_1\mathbf{e}_1+c_2\mathbf{e}_2+\cdots +c_n\mathbf{e}_n\right) \\
&=c_1\mathbf{W}^t\mathbf{e}_1+c_2\mathbf{W}^t\mathbf{e}_2+\cdots +c_n\mathbf{W}^t\mathbf{e}_n \\
&=c_1\lambda_1^t\mathbf{e}_1+c_2\lambda_2^t\mathbf{e}_2+\cdots +c_n\lambda_n^t\mathbf{e}_n
\end{split}
\end{equation}
where $\mathbf{v}^0$ can be denoted by $c_1\mathbf{e}_1+c_2\mathbf{e}_2+\cdots +c_n\mathbf{e}_n$ which is a linear combination of all the original orthonormal eigenvectors. Since the orthonormal eigenvectors form a basis for $\mathbb{R}^n$, any vector can be expanded by them. 

From Equation (~\ref{fig:updateT}), we have the following:
\begin{equation}
\frac{\mathbf{v}^t}{c_1\lambda_1^t}=\mathbf{e}_1+\frac{c_2}{c_1}\left(\frac{\lambda_2}{\lambda_1} \right)^t \mathbf{e}_2+\cdots +\frac{c_n}{c_1}\left(\frac{\lambda_n}{\lambda_1} \right)^t \mathbf{e}_n
\end{equation}

So the convergence rate of PI towards the dominant eigenvector $\mathbf{e}_1$ depends on the significant terms $\left(\frac{\lambda_i}{\lambda_1} \right)^t (2\leqslant i \leqslant n)$. If we let the power iteration method run long enough, it will converge to the dominant eigenvector $\mathbf{e}_1$ which is of little use in clustering. If we define the velocity at $t$ to be the vector $\boldsymbol{\delta}^t=\mathbf{v}^t-\mathbf{v}^{t-1}$ and define the acceleration at $t$ to be the vector $\boldsymbol{\epsilon}^t=\boldsymbol{\delta}^t-\boldsymbol{\delta}^{t-1}$, we can stop the power iteration when $\lVert\boldsymbol{\epsilon}^t\rVert_{max}$ is below a threshold $\hat{\epsilon}$. We use $\mathbf{v}^t$ as the graph embedding vector. Figure \ref{fig:compare_pdf} shows $\mathbf{e}_2$ and $\mathbf{v}^t$ of the graph Laplacian matrix of the data shown in Figure \ref{fig:example}. Compared with the PDF of $\mathbf{e}_2$, the PDF of $\mathbf{v}^t$ is more multimodal and thus is more promising for clustering.
\begin{figure}[!htb]
	\centering
	\begin{subfigure}{.23\textwidth}
		\includegraphics[width=\textwidth]{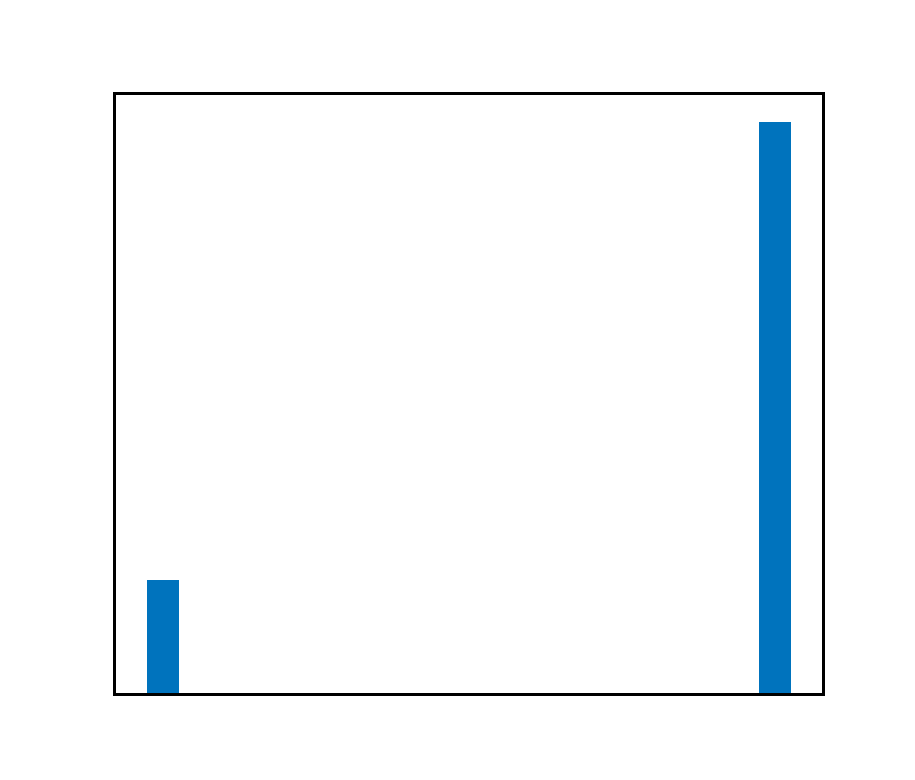}
		\caption{Histogram of $\mathbf{e}_2$.} 
	\end{subfigure}
	\centering
	\begin{subfigure}{.23\textwidth}
		\includegraphics[width=\textwidth]{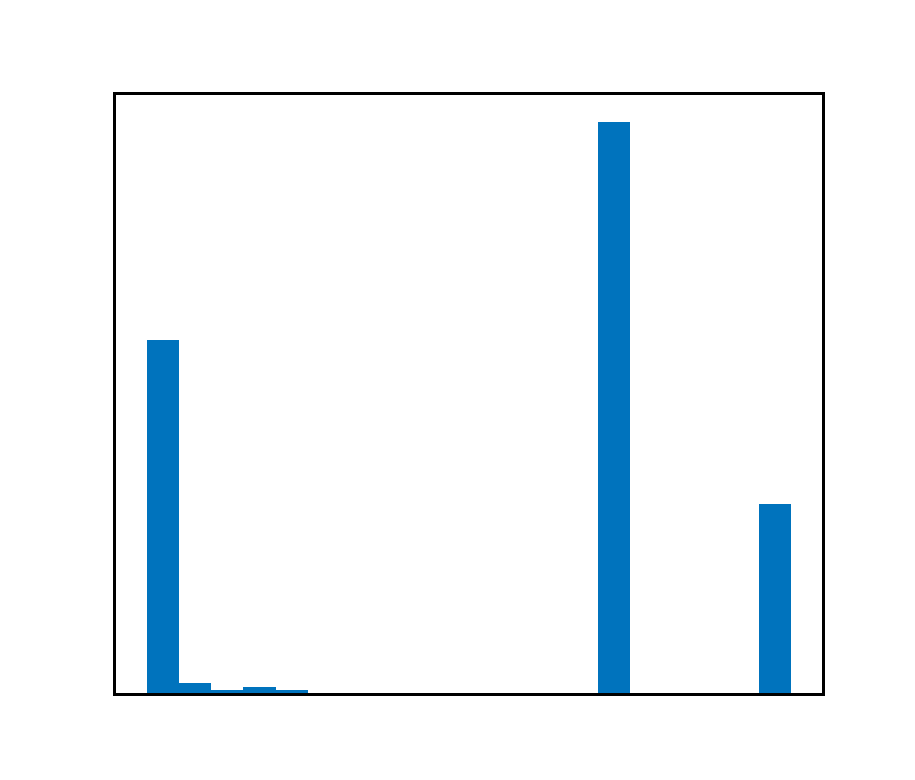}
		\caption{Histogram of $\mathbf{v}^t$.} 
	\end{subfigure}
	\caption{Comparison between the PDFs of $\mathbf{e}_2$ and $\mathbf{v}^t$.}
	\label{fig:compare_pdf}
\end{figure}

\subsection{Implematation Details and Analysis}
We can consider the embedding vector $\mathbf{v}^t$ as another designated attribute and perform local clustering on it. Let us elaborate the algorithmic details of LOCLU whose pseudo-code is given in Algorithm 1. Line 3 computes the random walk transition matrix, which costs $\mathcal{O}(e)$ where $e$ represents the number of edges in the graph. Line 4 initializes the starting vector for the power iteration method. Lines 5--9 use the power iteration method to compute an embedding vector for the graph structure. The power iteration method is guaranteed to converge (please refer to~\cite{lin2010power}). The time complexity for the power iteration method is $\mathcal{O}(e)$~\cite{lin2010very}. Line 10 considers the embedding vector $\mathbf{v}^t$ as another designated attribute and concatenates it to the data matrix $\mathbf{X}$. Lines 11--16 perform local clustering separately on the designated attributes. Lines 11--13 compute the dip test values and $p$-values for the designated attributes. At lines 15--16, we dip over the designated attributes according to their dip test values, from the highest to the lowest. Line 16 uses the local clustering method to find a local cluster around the given seed vertex on each designated attribute. The time complexity of the dip test at line 12 is $\mathcal{O}(n)$~\cite{hartigan1985dip}. Note that the dip test method first sorts the input data, which costs $\mathcal{O}(n\cdot \log(n))$ time. Thus, the time complexity of lines 11--14 in Algorithm 1 is $\mathcal{O}(u\cdot n\cdot \log(n))$, where $u$ is the number of the designated attributes and $n$ is the number of vertices.

The local clustering method is given in Algorithm 2. It recursively dips over the designated attribute until finding the local cluster around the given seed vertex. Line 3 dips over the designated attribute. At lines 4--9, if the given seed vertex does not belong to the current modal interval, we extract the vertices whose attribute values are on the left side of $x_l$ (line 5) or on the right side of $x_u$ (line 7) and update the cluster $\mathcal{C}$; at line 9, if the given seed vertex belongs to the current modal interval, we update the cluster $\mathcal{C}$ with the vertices whose attribute values are inside the modal interval. The time complexity of the dip test at line 3 is $\mathcal{O}(n\cdot \log(n))$. Thus, the time complexity of the local clustering procedure is bounded by $\mathcal{O}(k\cdot n\cdot \log(n))$, where $k\ll n$ is the number of modes in the data. We remark that the local clustering method is guaranteed to converge. The worst case is that each vertex is a mode and local clustering method finds the given seed vertex as the local cluster, which will lead to the termination of the dip test. Thus, LOCLU is also guaranteed to converge. The time complexity of LOCLU is $\mathcal{O}\left((u+k)\cdot n\cdot \log(n)+e\right)$.

\begin{theorem}
	The local cluster detected by LOCLU is unimodal in each designated attribute and the graph structure.
\end{theorem}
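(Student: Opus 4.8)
The plan is to show that the final theorem follows directly by combining Theorem~\ref{the:uni} with the structure of Algorithm~1 and Algorithm~2. The claim has two parts: (i) the detected cluster is unimodal in each designated attribute, and (ii) it is unimodal in the graph structure (i.e., in the embedding vector $\mathbf{v}^t$). Since LOCLU treats $\mathbf{v}^t$ as just another designated attribute appended to $\matrixSym{X}$ (line~10 of Algorithm~1), part (ii) is a special case of part (i), so it suffices to argue (i) carefully and then note that the same reasoning applies verbatim to the embedding coordinate.

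First I would fix attention on a single designated attribute $\mathbf{x}_i$ and invoke Theorem~\ref{the:uni}: the local clustering routine (Algorithm~2) applied to $\mathbf{x}_i$ recursively dips over the attribute values, always retaining the sub-interval containing the seed vertex $v_q$, and terminates only when the $p$-value of the dip test exceeds $\alpha$, i.e., when the retained set is (statistically) unimodal in $\mathbf{x}_i$. So immediately after processing attribute $\mathbf{x}_i$, the current cluster $\mathcal{C}$ is unimodal in $\mathbf{x}_i$. The subtlety is that LOCLU processes the attributes \emph{sequentially} (lines~15--16), from the most multimodal to the least, and each pass only shrinks $\mathcal{C}$ further. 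So the second part of the argument is a monotonicity claim: restricting an already-unimodal set of values to a sub-interval preserves unimodality. This is exactly the content of the second sentence of Theorem~\ref{the:uni} --- for any $x_l \le x_i < x_j \le x_u$, the PDF on $[x_i,x_j]$ is unimodal --- so once attribute $\mathbf{x}_i$ has been made unimodal, every later pass (which only removes points outside some interval in another coordinate, hence removes a set of $\mathbf{x}_i$-values that... ) needs a little care.

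The main obstacle is precisely this cross-attribute monotonicity: when we later dip over attribute $\mathbf{x}_j$ and discard the vertices whose $\mathbf{x}_j$-value lies outside $[x_l^{(j)}, x_u^{(j)}]$, the surviving vertices need not form an \emph{interval} of $\mathbf{x}_i$-values --- we are taking an arbitrary subset of the $\mathbf{x}_i$-projection, not a sub-interval. Theorem~\ref{the:uni} as literally stated only guarantees unimodality of sub-intervals. I would handle this by appealing to the definition of unimodality via the CDF (convex before the mode, concave after): the argument in the proof of Theorem~\ref{the:uni} actually shows that if \emph{any} subset of a unimodal-in-$\mathbf{x}_i$ cluster were multimodal in $\mathbf{x}_i$ then dipping would not have terminated --- but strictly this requires that the subset still contains the seed and that the dip recursion would have ``seen'' it. The honest resolution (and, I suspect, the one the authors intend) is to observe that LOCLU re-checks: after the final attribute pass it could in principle loop, or one simply notes that the \emph{last} attribute processed leaves $\mathcal{C}$ unimodal in that attribute, and for earlier attributes the claim follows because each was certified unimodal on a superset of the final $\mathcal{C}$ and the relevant $p$-value is monotone under the particular interval-restrictions performed. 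I would therefore present the proof as: (1) by Theorem~\ref{the:uni}, the pass over attribute $\mathbf{x}_i$ yields a cluster unimodal in $\mathbf{x}_i$; (2) subsequent passes only delete vertices lying outside an interval of some other coordinate, and by the sub-interval statement of Theorem~\ref{the:uni} applied in coordinate $\mathbf{x}_i$ this cannot create a new mode; (3) apply the same argument with $\mathbf{x}_i$ replaced by the embedding vector $\mathbf{v}^t$, which is handled identically by line~10. Assembling these three points gives the theorem.

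\begin{proof}
By construction (line~10 of Algorithm~1), LOCLU appends the graph embedding vector $\mathbf{v}^t$ to the attribute matrix and then treats it exactly like a designated attribute; hence it suffices to prove that the returned cluster $\mathcal{C}$ is unimodal in every coordinate over which local clustering is run. Fix such a coordinate $\mathbf{x}_i$. When LOCLU processes $\mathbf{x}_i$ (lines~15--16), it invokes the local clustering routine of Algorithm~2, which by Theorem~\ref{the:uni} recursively dips over the $\mathbf{x}_i$-values, always keeping the sub-interval that contains the seed vertex $v_q$, and halts only once the dip test accepts the null hypothesis that the retained $\mathbf{x}_i$-projection is unimodal. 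Thus, immediately after the pass over $\mathbf{x}_i$, the current cluster is unimodal in $\mathbf{x}_i$; moreover its $\mathbf{x}_i$-values occupy some interval $[x_l,x_u]$. Every later pass (over another coordinate $\mathbf{x}_j$, $j$ processed after $i$) only removes vertices whose $\mathbf{x}_j$-value falls outside an interval, which restricts the surviving $\mathbf{x}_i$-values to a subset of $[x_l,x_u]$; by the second assertion of Theorem~\ref{the:uni}, the PDF on any subinterval $[x_i',x_j']$ with $x_l\le x_i'<x_j'\le x_u$ remains unimodal, so no new mode in coordinate $\mathbf{x}_i$ is introduced, and $\mathcal{C}$ stays unimodal in $\mathbf{x}_i$. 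Since $i$ was arbitrary, the final $\mathcal{C}$ is unimodal in every designated attribute; applying the same reasoning to the appended coordinate $\mathbf{v}^t$ shows it is unimodal in the graph structure as well.
\end{proof}
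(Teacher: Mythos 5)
Your proof takes essentially the same route as the paper's: treat the embedding vector $\mathbf{v}^t$ as one more designated attribute, invoke Theorem~\ref{the:uni} to get unimodality after each pass, and argue that subsequent passes cannot destroy unimodality in coordinates already processed. The cross-attribute subtlety you flag in your discussion---that removing vertices by an interval in coordinate $\mathbf{x}_j$ yields an arbitrary \emph{subset}, not a sub-interval, of the $\mathbf{x}_i$-values, whereas Theorem~\ref{the:uni} only covers sub-intervals---is a real gap, but it is present in the paper's own proof as well (the paper simply asserts the surviving points lie in a nested interval $[x_{l_2}^{(1)}, x_{u_2}^{(1)}]$ of the first attribute), so your write-up is faithful to, and no weaker than, the published argument.
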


\begin{proof}
	For the data matrix $\mathbf{X}=[\mathbf{X}, \mathbf{v}^t]$ at line 10 in Algorithm 1, we first apply the local clustering method on the attribute with the highest dip test value and it will find a unimodal cluster in the interval $[x_{l_1}^{(1)}, x_{u_1}^{(1)}]$. Then we apply it on the attribute with the second largest dip test value and it will find a unimodal cluster in the interval $[x_{l_2}^{(2)}, x_{u_2}^{(2)}]$, where $l_1\leq l_2<u_2\leq u_1$. From Theorem \ref{the:uni}, we know that the data points in the interval $[x_{l_2}^{(1)}, x_{u_2}^{(1)}]$ of the attribute with the highest dip test value remains unimodal. If we apply the local clustering method on each column of $\mathbf{X}$, from the highest to the lowest with respect to their dip test values, the final local cluster detected by LOCLU is unimodal in each designated attribute and the graph structure.
\end{proof}

\begin{algorithm2e}
	\SetKw{KwRn}{randn}
	\SetKw{KwOr}{or}
	\SetKw{KwDip}{DipTest}
	\SetKw{KwSort}{sort}
	\SetKwFunction{UC}{LocalClustering}
	\SetKwFunction{Pr}{Prune}
	\SetKwFunction{MPP}{MultiProPursuit}
	\SetKwFunction{recurseDip}{GCD}
	\KwIn{Adjacency matrix $\mathbf{A}$, data matrix $\mathbf{X} \in \mathbb{R}^{n \times d}$, the seed vertex index $q$, the indexes of the designated attributes $\mathcal{I}=\{a_1,a_2,\cdots,a_u\}$}
	\KwOut{Local cluster $\mathcal{C}$}
	$\hat{\epsilon}\leftarrow 0.001$, $t\leftarrow 0$, \upshape iter$\leftarrow 1000$\;
	$\mathcal{C}\leftarrow [1:n]$\tcc*[r]{$\mathcal{C}$ contains the indexes of vertices.}
    compute the random walk transition matrix $\mathbf{W}$\;
    $\mathbf{v}^0\leftarrow$ \KwRn $(n,1)$\;
    \tcc{power iteration}
    \Repeat{$\lVert\boldsymbol{\delta}^{t+1}-\boldsymbol{\delta}^t\rVert_{\max}\leq\hat{\epsilon}$ \KwOr $t\geq$ \upshape iter}{
    	$\mathbf{v}^{t+1}\leftarrow \frac{\mathbf{W}\mathbf{v}^t}{\lVert\mathbf{W}\mathbf{v}^t \rVert_1}$\;
    	$\boldsymbol{\delta}^{t+1}\leftarrow \lvert\mathbf{v}^{t+1}-\mathbf{v}^t\rvert$\;
    	$t\leftarrow t+1$\;
    }
    $\mathbf{X}\leftarrow [\mathbf{X}, \mathbf{v}^t]$, $a_{u+1}\leftarrow d+1 $, $\mathcal{I}\leftarrow \mathcal{I}\cup a_{u+1}$\tcc*[r]{$[\cdot, \cdot]$ means concatenation.}
    \tcc{Perform local clustering separately on the embedding vector of the graph structure and designated attributes.}
	\For{$i \leftarrow1\hspace{0.5em}\KwTo\hspace{0.5em}u+1\hspace{0.5em}$}{
		$[\mbox{dip}, p\mbox{-value}, \mathbf{t},  x_l, x_u]\leftarrow$ \KwDip($\mathbf{X}(:,a_i)$)\;
		$\mathbf{d}(i)\leftarrow \mbox{dip}$\;
	}
	$[\mathbf{d},\mathbf{s}]\leftarrow \KwSort(\mathbf{d})$\tcc*[r]{descending sort, $\mathbf{s}$ contains the indexes of attributes     sorted by their dip test values.}
	\For{$i \leftarrow 1 \hspace{0.5em}\KwTo\hspace{0.5em} u+1\hspace{0.5em}$}{
			$\mathcal{C}\leftarrow$\UC{$\mathcal{C}$,$\mathbf{X}$,$q$,$\mathbf{s}(i)$}\;
    }
    \Return{$\mathcal{C}$}\;
	\caption{LOCLU}
	\label{alg:loclu}
\end{algorithm2e}

\begin{algorithm2e}
	\SetKw{KwDip}{DipTest}
	\SetKw{KwSort}{sort}
	\KwIn{Cluster $\mathcal{C}$, data matrix $\mathbf{X}$, the seed vertex index $q$, the index $s$ of the designated attribute}
	\KwOut{Local cluster $\mathcal{C}$}
	\Repeat{$p\mbox{-value}>0.05$}{
		$\mathbf{x}\leftarrow\mathbf{X}(\mathcal{C},s)$\;
		$[\mbox{dip}, p\mbox{-value}, x_l, x_u]\leftarrow$ \KwDip($\mathbf{x}$)\;
		\uIf{$\mathbf{x}(q)<x_l$}{
			$\mathcal{C}\leftarrow \{\sigma_1,\sigma_2,\ldots\}$\tcc*[r]{$\left\lbrace \mathbf{x}(\sigma_1),\mathbf{x}(\sigma_2),\ldots\right\rbrace <x_l$}
		}
		\uElseIf{$\mathbf{x}(q)>x_u$}{
			$\mathcal{C}\leftarrow \{\sigma_1,\sigma_2,\ldots\}$\tcc*[r]{$\left\lbrace \mathbf{x}(\sigma_1),\mathbf{x}(\sigma_2),\ldots\right\rbrace >x_u$}
		}
		\Else{
			$\mathcal{C}\leftarrow \{\sigma_1,\sigma_2,\ldots\}$\tcc*[r]{$\left\lbrace \mathbf{x}(\sigma_1),\mathbf{x}(\sigma_2),\ldots\right\rbrace \in[x_l, x_u]$}
		}
	}
	\Return{$\mathcal{C}$}\;
	\caption{LocalClustering}
	\label{alg:uc}
\end{algorithm2e}

\section{Experimental Evaluation}\label{experiments}
\subsection{Experiment Settings}
We thoroughly evaluate LOCLU on cluster quality and runtime using both synthetic and real-world attributed graphs. We compare LOCLU with baseline methods whose descriptions are as follows:
\begin{itemize}
	\item FocusCO~\cite{DBLP:conf/kdd/PerozziASM14} identifies the relevance of vertex attributes that makes the user-provided examplar vertices similar to each other. Then it reweighs the graph edges and extracts the focused cluster.
\item SG-Pursuit~\cite{chen2017generic} is a generic and efficient method for detecting subspace clusters in attributed graphs. The main idea is to iteratively identify the intermediate solution that is close-to-optimal and then project it to the feasible space defined by the topological and sparsity constraints.
\item UNCut~\cite{ye2017attributed} proposes unimodal normalized cut to find cohesive clusters in attributed graphs. The homogeneity of attributes is measured by the proposed unimodality compactness which also exploits Hartigans' dip-test.
\item AMEN~\cite{perozzi2016scalable,perozzi2018discovering} develops a measure called \textsc{Normality} to quantify both internal consistency and external separability of a graph cluster. Then, the graph cluster that has the best \textsc{Normality} score is extracted.
\item AGC~\cite{zhang2019attributed} is an adaptive graph convolution method for attributed graph clustering, using spectral convolution filters on the vertex attributes.
\item HK~\cite{kloster2014heat} is a local and deterministic method to accurately compute a heat kernel diffusion in a graph. Then, it finds small conductance community around a given seed vertex. HK only considers the graph structure.
\end{itemize} 

We use the Normalized Mutual Information (NMI)~\cite{manning2010introduction} and the $F_1$ score~\cite{van2016local,kloster2014heat} to evaluate the cluster quality. NMI is a widely used metric for computing clustering accuracy of a method against the desired ground truth. NMI is defined as $NMI(\mathcal{C}^\ast,\mathcal{C})=\frac{2\times I(\mathcal{C}^\ast; \mathcal{C})}{H(\mathcal{C}^\ast)+H(\mathcal{C})}$, where $\mathcal{C}^\ast$ is ground truth, $\mathcal{C}$ is the detected cluster, $I(\cdot;\cdot)$ is mutual information, $H(\cdot)$ is entropy. $F_1$ score is the harmonic mean of precision $P$ and recall $R$ and is defined as $F_1=2\cdot\frac{P \cdot R}{P+R}$, where $P =\frac{|\mathcal{C}\cap\mathcal{C}^\ast|}{|\mathcal{C}|}$, $R = \frac{|\mathcal{C}\cap\mathcal{C}^\ast|}{|\mathcal{C}^\ast|}$. The higher the NMI and $F_1$ score, the better the clustering.

For the experiments on the synthetic graphs, we give the correct number of clusters to SG-Pursuit, UNCut, and AGC. We also give the correct size of each cluster to SG-Pursuit. We compute the NMI and the $F_1$ score for each combination of the detected cluster and the ground truth cluster and report the best NMI and $F_1$ score. Note that the selection of the seed vertex and the indexes of the designated attributes depend on user's preferences. In the experiments, we randomly sample a vertex as the seed vertex, and only dip over the most multimodal attribute whose dip test value is the highest. FocusCO needs to compute the relevant attribute weight vector $\beta$ which is then used to weight each edge in the graph. For a fair comparison, the entry in $\beta$ that corresponds to the attribute whose dip test value is the highest is set to one and the other entries are set to zero. AMEN also needs to compute the relevant attribute weight vector. Analogous to FocusCO, we set the corresponding entry to one and other entries zero. Since HK is designed for plain graphs and cannot handle attribute information, we incorporate the attribute information by weighing the edges of the graph using the weighting vector $\beta$. We also report the results of HK on the graph structure. We call these two versions as weighted HK (w) and unweighted HK (uw). We run each experiment 50 times and at each time we randomly sample a seed vertex.

All the experiments are run on the same machine with the Ubuntu 18.04.1 LTS operating system and an Intel Core Quad i7-3770 with 3.4 GHz and 32 GB RAM. LOCLU is written in Java. The code of LOCLU and all the synthetic and real-world graphs used in this work are publicly available at Github\footnote{\url{https://github.com/yeweiysh/LOCLU}}.

\subsection{Synthetic Graphs}
\subsubsection{Clustering Quality}
To study the clustering performance, we generate synthetic graphs with varying numbers of vertices $n$, attributes $d$, varying ratio of relevant attribute and variable cluster size range. For the case of varying $n$, we fix the attribute dimension $d=20$ and the ratio of relevant attributes $50\%$. For the case of varying $d$, we fix the number of vertices $n=1000$ and the ratio of relevant attributes $50\%$.  For the case of varying the ratio of relevant attribute, we fix the attribute dimension $d=20$ and the number of vertices $n=1000$. For varying the cluster size range, we fix the attribute dimension $d=20$ and the ratio of relevant attributes $50\%$. 

All the graphs are generated based on the planted partitions model~\cite{condon2001algorithms} which is also used in FocusCO and SG-Pursuit. Given the desired number of vertices in each cluster, we define a block for the cluster on the diagonal of the adjacency matrix and randomly asign a 1 (an edge) for each entry in the block with probability 0.35 (density of edges in each cluster). For the blocks that are not on the diagonal of the adjacency matrix, we randomly assign an edge for each entry in the block with a probability of 0.01 (density of edges between clusters). We further bisect each graph cluster into two new clusters and then assign attributes to each new cluster. In this case, a method that is only applicable for graph structure cannot detect the ``real cluster'' (unimodal both in the graph structure and designated attributes). To add vertex attributes, for each new graph cluster, we generate the values of relevant attributes according to a Gaussian distribution with the mean value of each attribute randomly sampled from the range $\left[ 0,10\right] $, and the variance value of each attribute 0.001. Following FocusCO~\cite{DBLP:conf/kdd/PerozziASM14}, the variance is specifically chosen to be small such that the clustered vertices ``agree'' on their relevant attributes. To make the other attributes of clusters irrelevant to the graph structure, we first randomly permute the vertex labels and then generate each cluster's irrelevant attribute values according to a Gaussian distribution with mean randomly sampled from the range $\left[ 10,20\right] $ and variance 1.

To study how the number of designated attributes affect the performance of LOCLU, we use our generative model to generate a synthetic data with $n=1000$ vertices, $d=20$ attributes, and the ratio of relevant attributes $50\%$. Figure \ref{fig:varying_deg} shows that LOCLU can almost detect the ground truth. When increasing the number of designated attributes, the performance of LOCLU does not change much. LOCLU considers the data points that situate in the modal interval as the cluster. However, for some boundary data points of Gaussian clusters, the modal interval may not include them. This is the reason why the performance curves of LOCLU have some small vibrations.
\begin{figure}[!htp]
	\centering
	\includegraphics[width=0.4\textwidth]{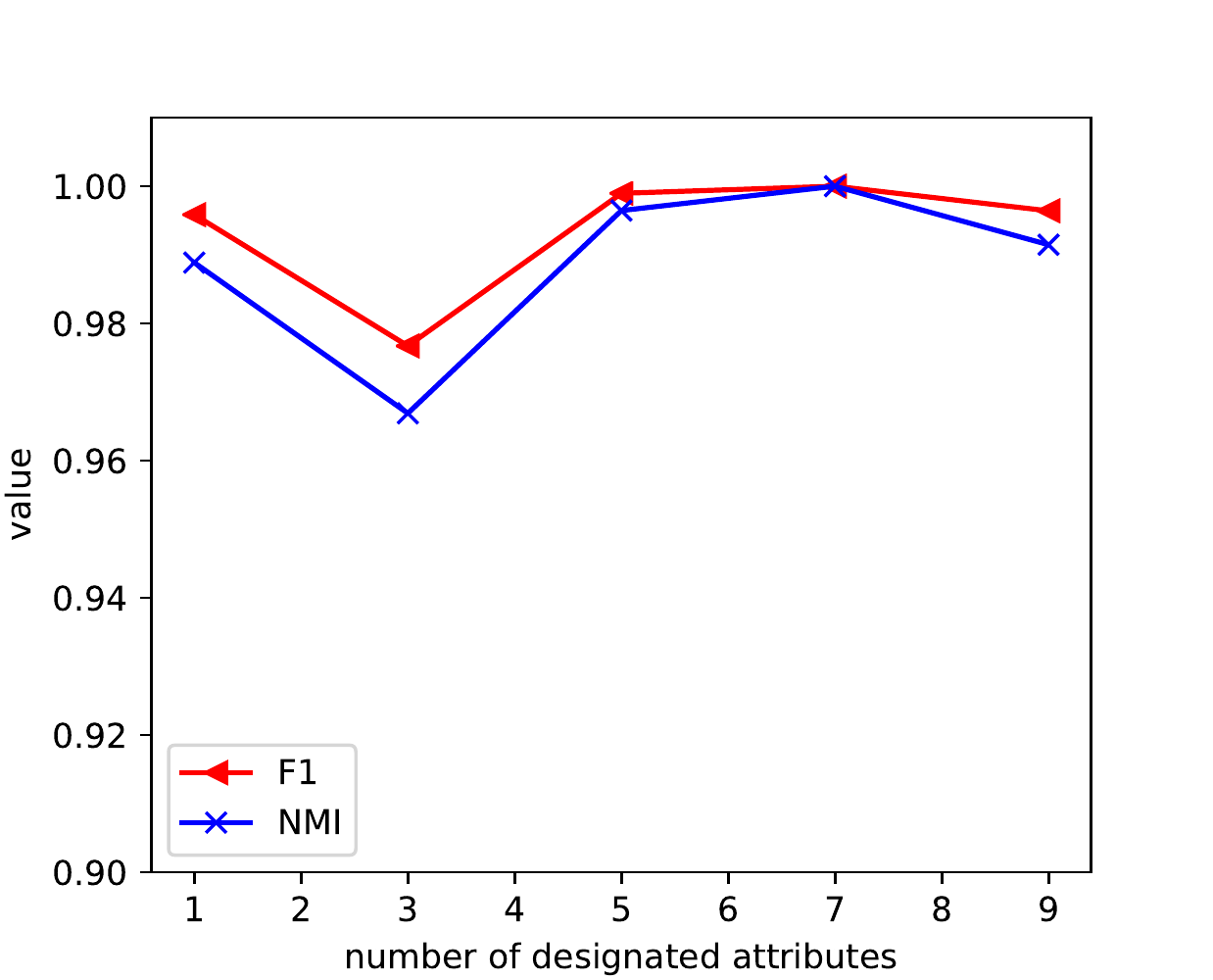}
	\caption{Clustering results of LOCLU with the increasing number of designated attributes.}
	\label{fig:varying_deg}
\end{figure}

Figures~\ref{fig:F1score} and~\ref{fig:NMI} show the quality results. Since HK(w) and HK (uw) have similar performance, we only show one of them. Figure~\ref{fig:F1score}(d) shows the results of each method when varying the cluster size range. We let the graph contains clusters with variable sizes and increase the variance of the cluster sizes. The cluster size is randomly drawn from the variable ranges. In Figure~\ref{fig:F1score}, we can see that LOCLU outperforms the most comparison methods. In most cases, LOCLU beats all the competitors with a large margin, although we provide them with the correct parameters. Figure~\ref{fig:F1score} also shows that SG-Pursuit is the most unstable method compared with the other methods in all these scenarios. AGC is a deep learning method. We can see that AGC is the best in all the comparison methods. Figure~\ref{fig:F1score}(c) demonstrates that the performance of AGC is dramatically increasing with the increasing ratio of relevant attribute. In Figure~\ref{fig:NMI}, we have similar conclusions. As pointed out above, for some boundary data points of Gaussian clusters, the modal interval may not include them. Thus, the curves of LOCLU has some small vibrations. In addition, we randomly generate the mean values of the attributes of the graph cluster. If the mean values of the attributes of two graph clusters are very close, the dip test may think these two clusters' attributes follow a unimodal distribution. Therefore, LOCLU cannot separate them. This is another reason that the curves of LOCLU have some small vibrations.
\begin{figure}[!htb]
	\centering
    \begin{subfigure}{.23\textwidth}
	\includegraphics[width=\textwidth]{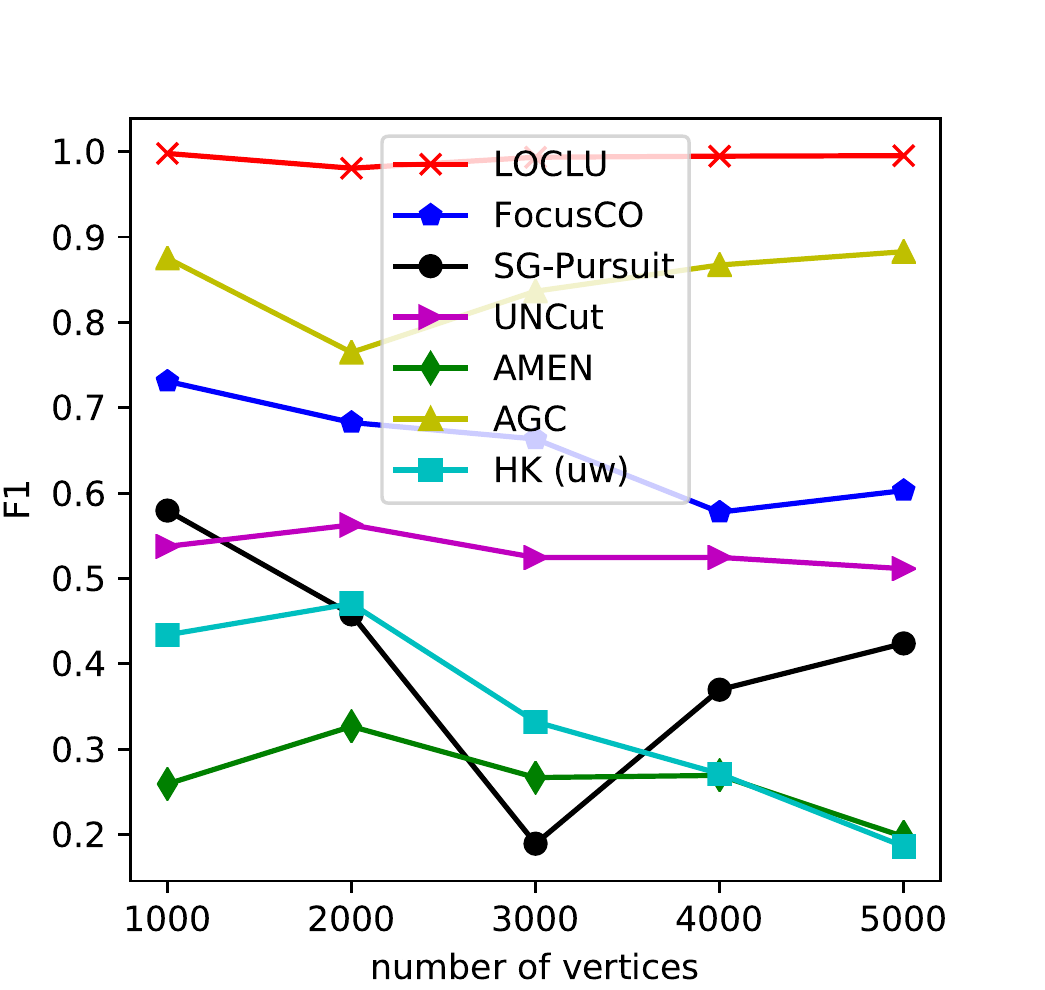}
	\caption{} 
    \end{subfigure}
	\centering
    \begin{subfigure}{.23\textwidth}
	\includegraphics[width=\textwidth]{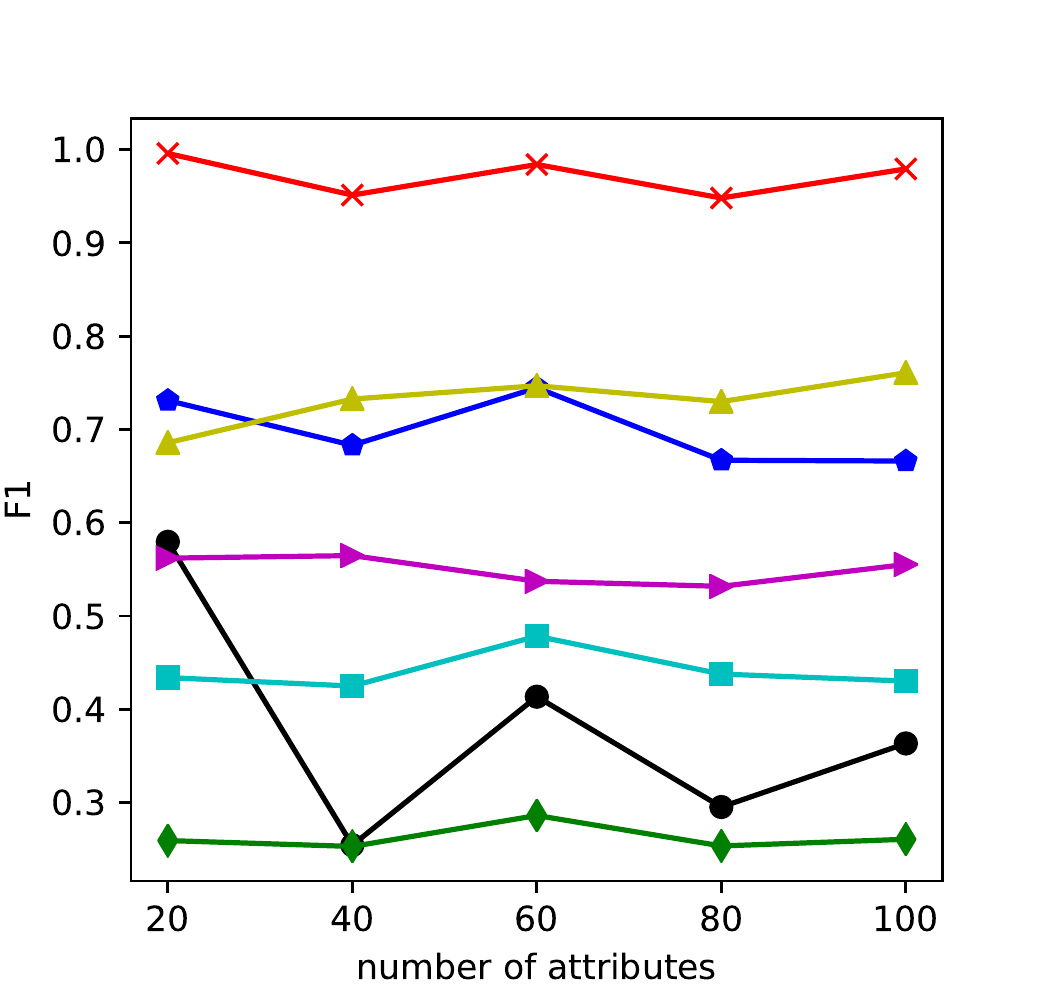}
	\caption{} 
    \end{subfigure}
	
	\centering
	\begin{subfigure}{.23\textwidth}
	\includegraphics[width=\textwidth]{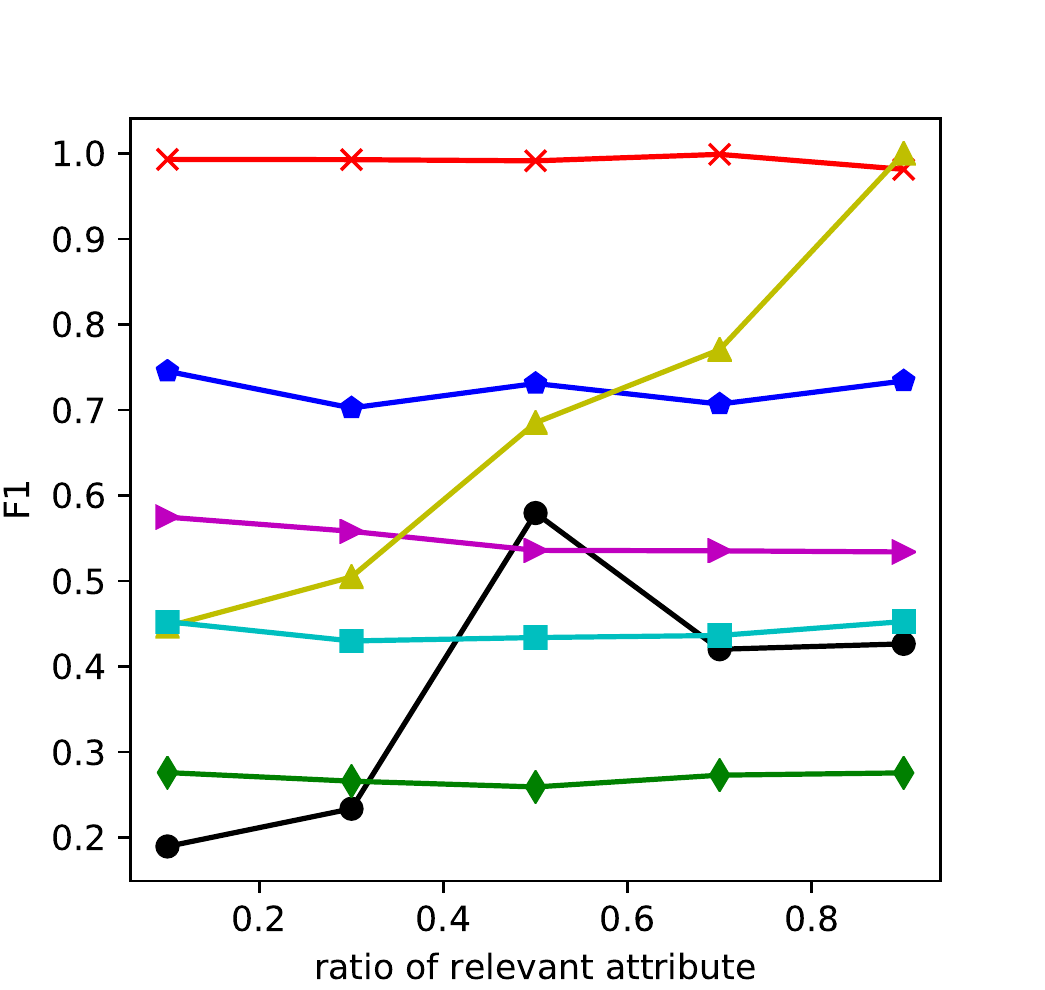}
	\caption{} 
	\end{subfigure}
	\centering
	\begin{subfigure}{.23\textwidth}
	\includegraphics[width=\linewidth]{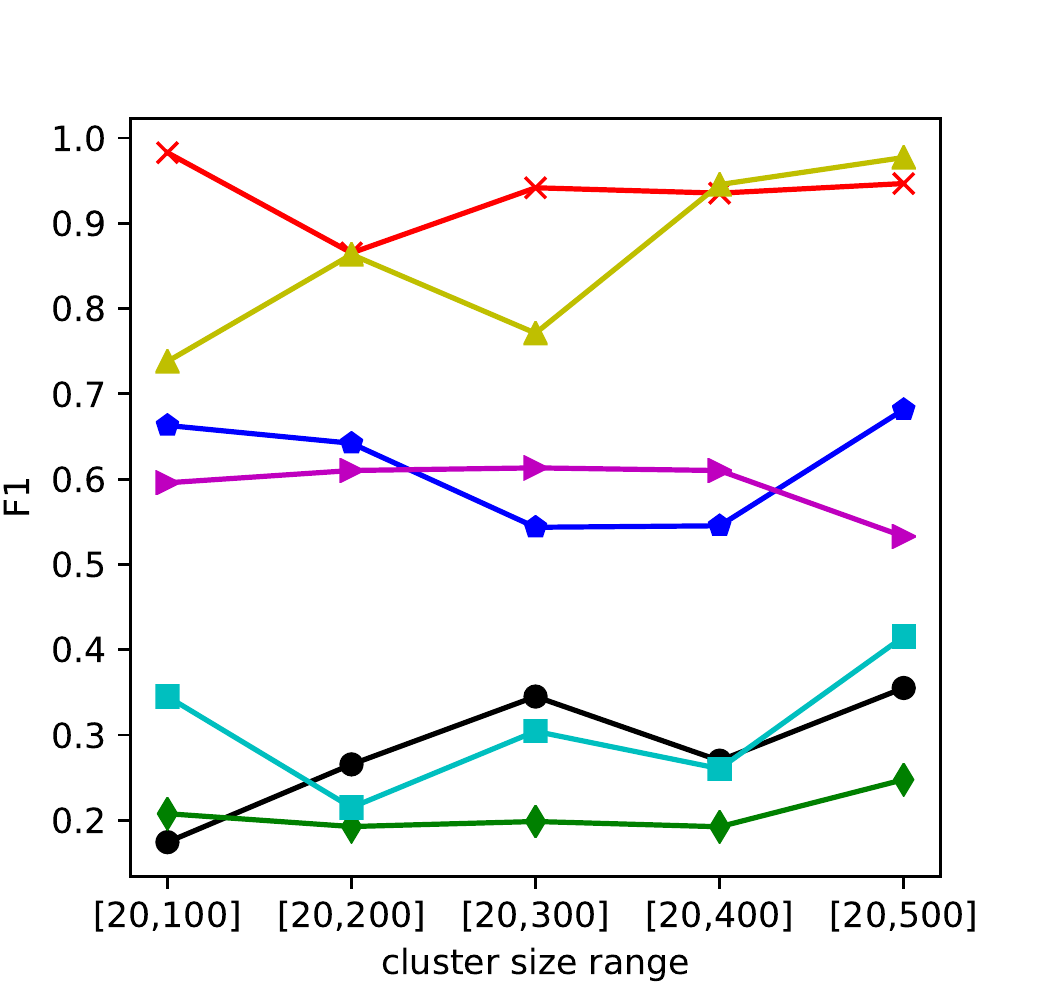}
	\caption{}
	\end{subfigure}
	\caption{Clustering results ($F_1$) on synthetic graphs.}
	\label{fig:F1score}
\end{figure}

\begin{figure}[!htb]
	\centering
	\begin{subfigure}{.23\textwidth}
	\includegraphics[width=\textwidth]{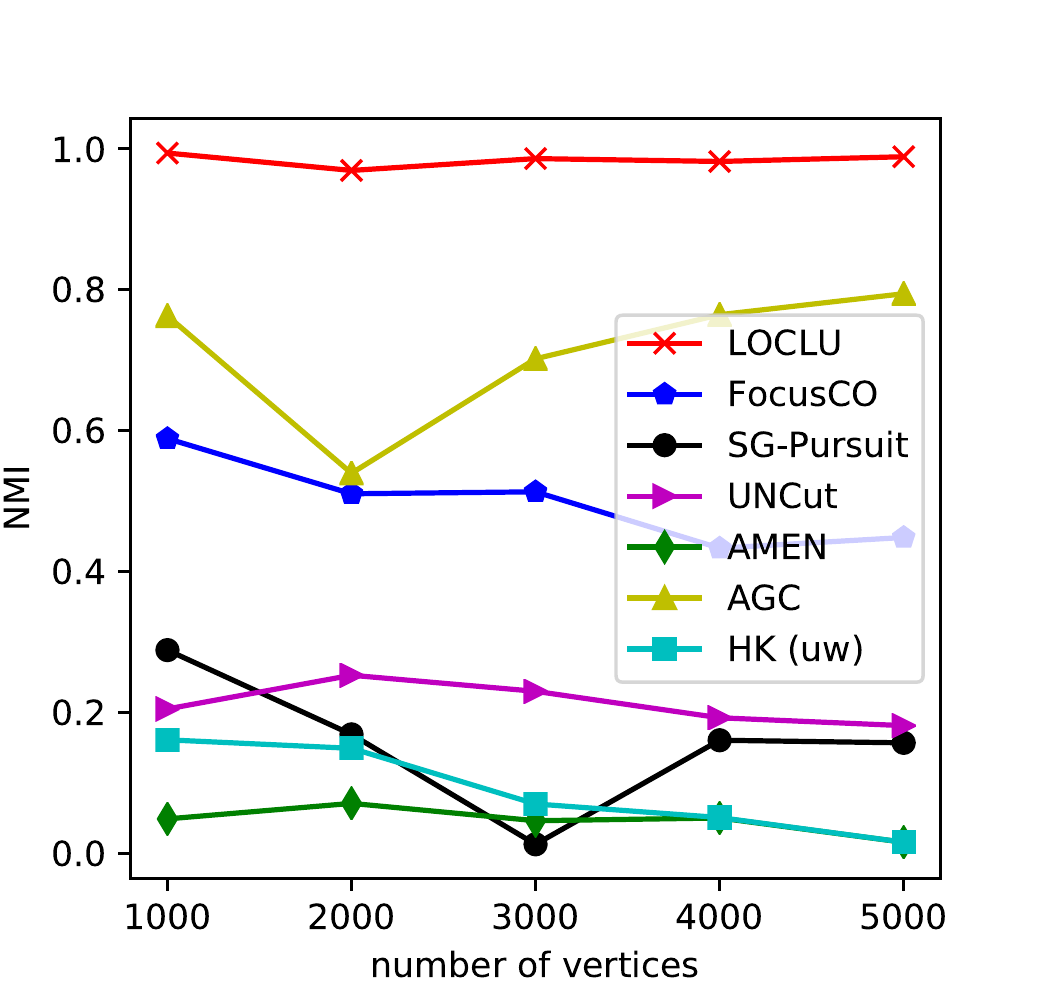}
	\caption{} 
	\end{subfigure}
	\centering
	\begin{subfigure}{.23\textwidth}
	\includegraphics[width=\textwidth]{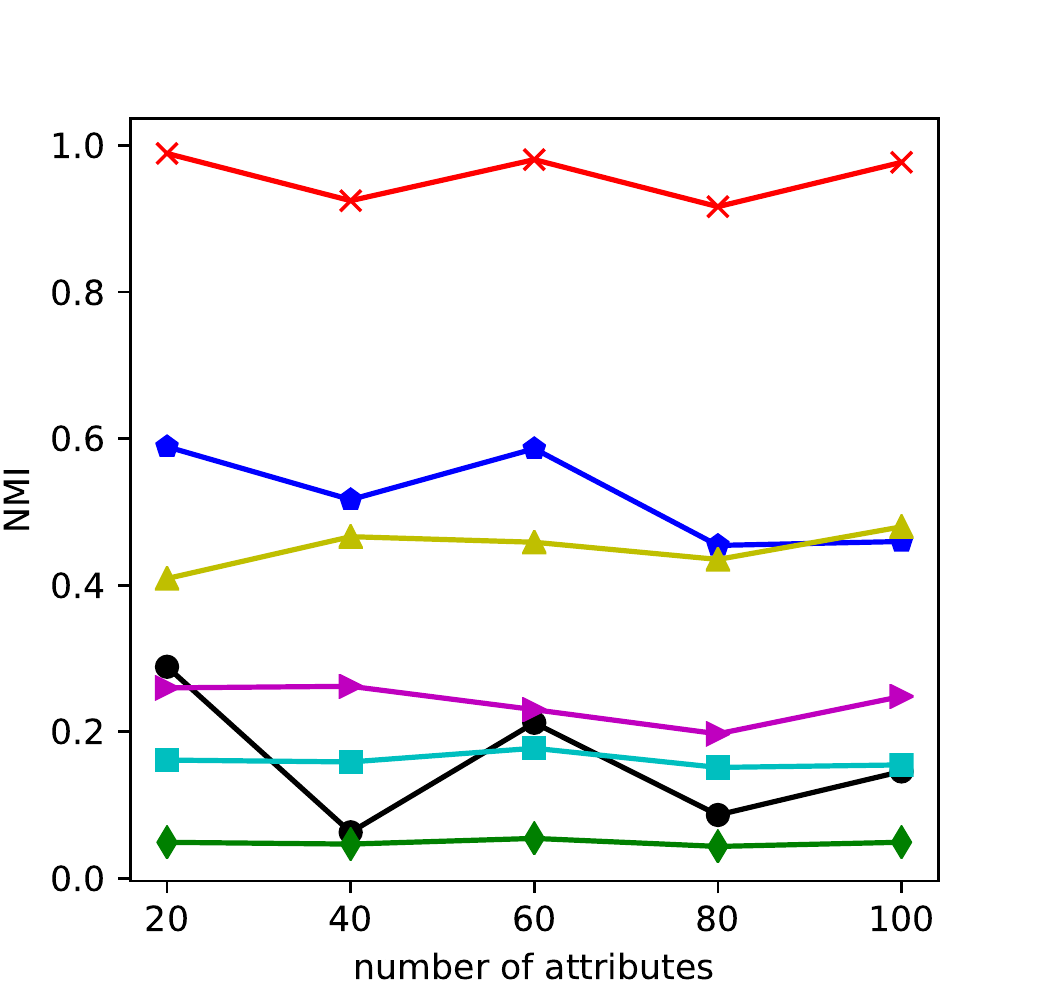}
	\caption{} 
	\end{subfigure}
	
	\centering
	\begin{subfigure}{.23\textwidth}
	\includegraphics[width=\textwidth]{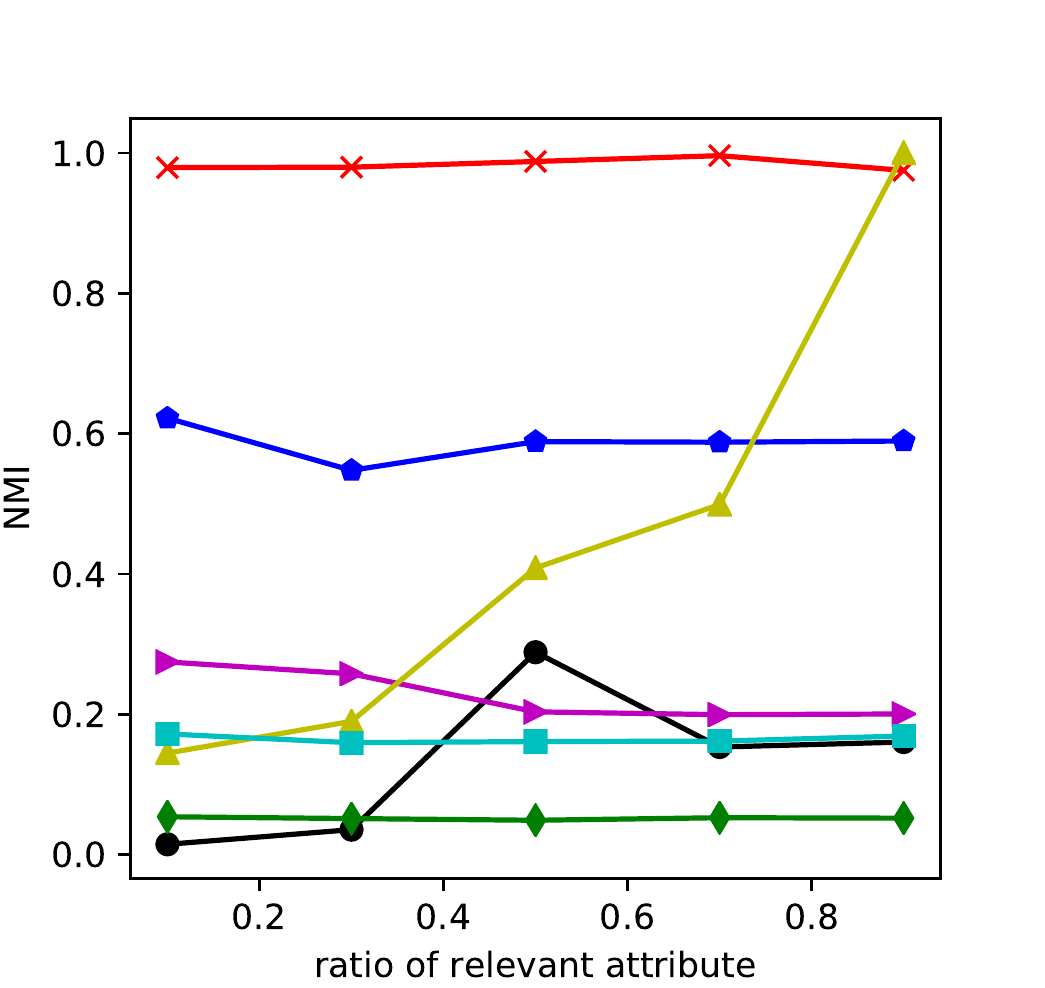}
	\caption{} 
	\end{subfigure}
	\centering
	\begin{subfigure}{.23\textwidth}
	\includegraphics[width=\linewidth]{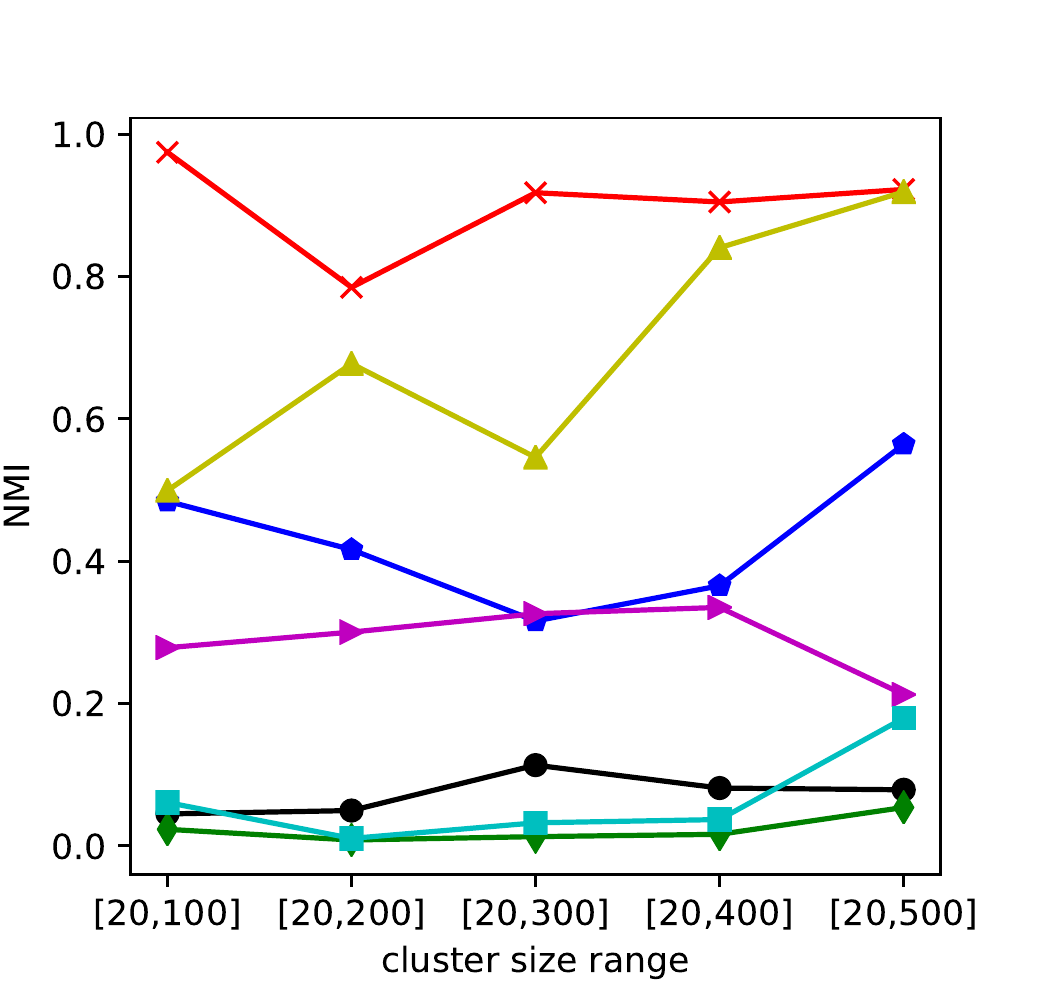}
	\caption{}
	\end{subfigure}
	\caption{Clustering results (NMI) on synthetic graphs.}
	\label{fig:NMI}
\end{figure}

\subsubsection{Scalability}
In this section, we study the scalability of all the methods. We still use the generative model to generate synthetic graphs. For the case of varying the number of attributes, we fix the number of vertices $n=2000$ and the ratio of the relevant attributes $50\%$. For the case of varying the number of vertices, we fix the attribute dimension $d=20$ and the ratio of the relevant attributes $50\%$. Because the running time of the weighted and unweighted versions of the baseline HK are similar, we only give the results of the unweighted version. The runtime of each method is demonstrated in Figure~\ref{fig:runtime}. Since HK (uw) only considers the graph structure, its running time is the lowest. AGC has the second lowest running time. LOCLU outperforms FocusCO, SG-Pursuit, UNCut, and AMEN in most cases.
\begin{figure}[!htb]
	\centering
	\begin{subfigure}{.23\textwidth}
	\includegraphics[width=\linewidth]{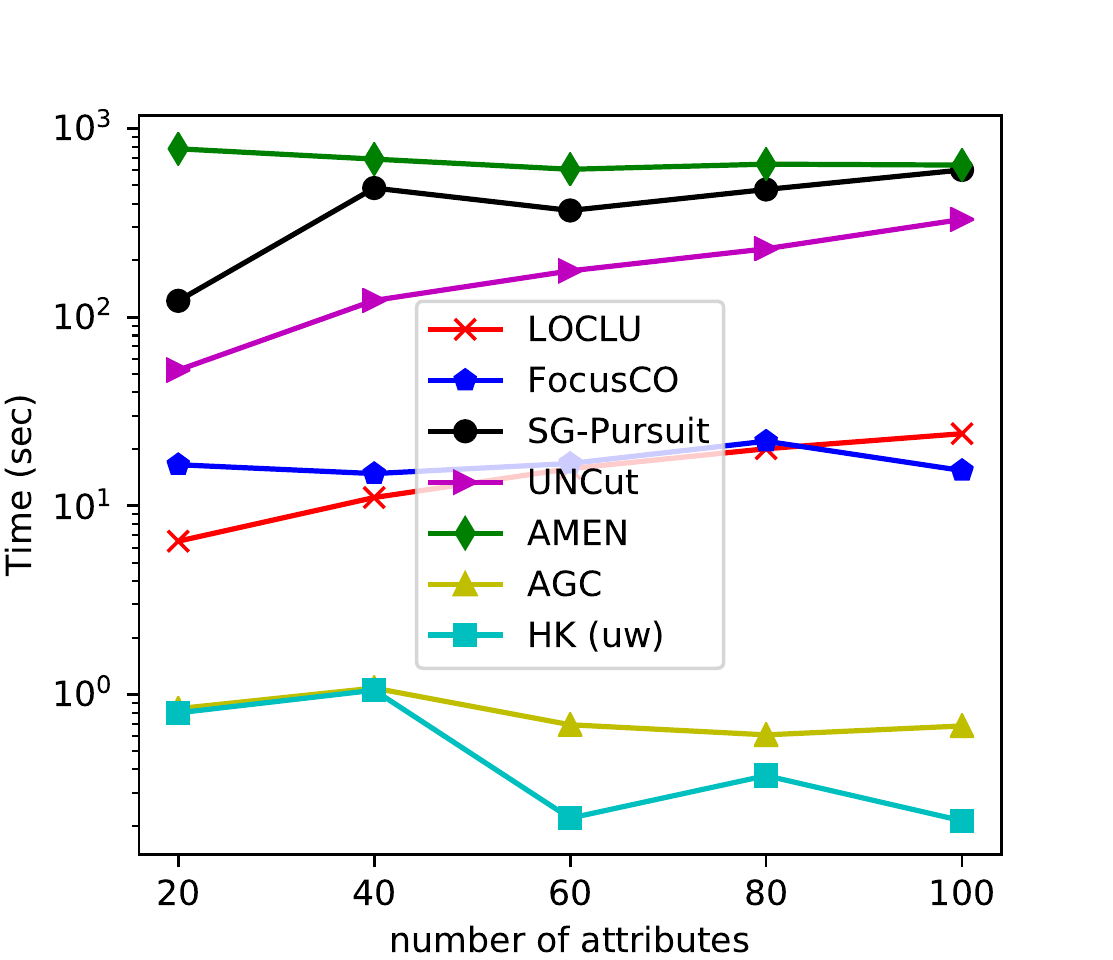}
	\caption{}
	\end{subfigure}
	\centering
	\begin{subfigure}{.23\textwidth}
	\includegraphics[width=\linewidth]{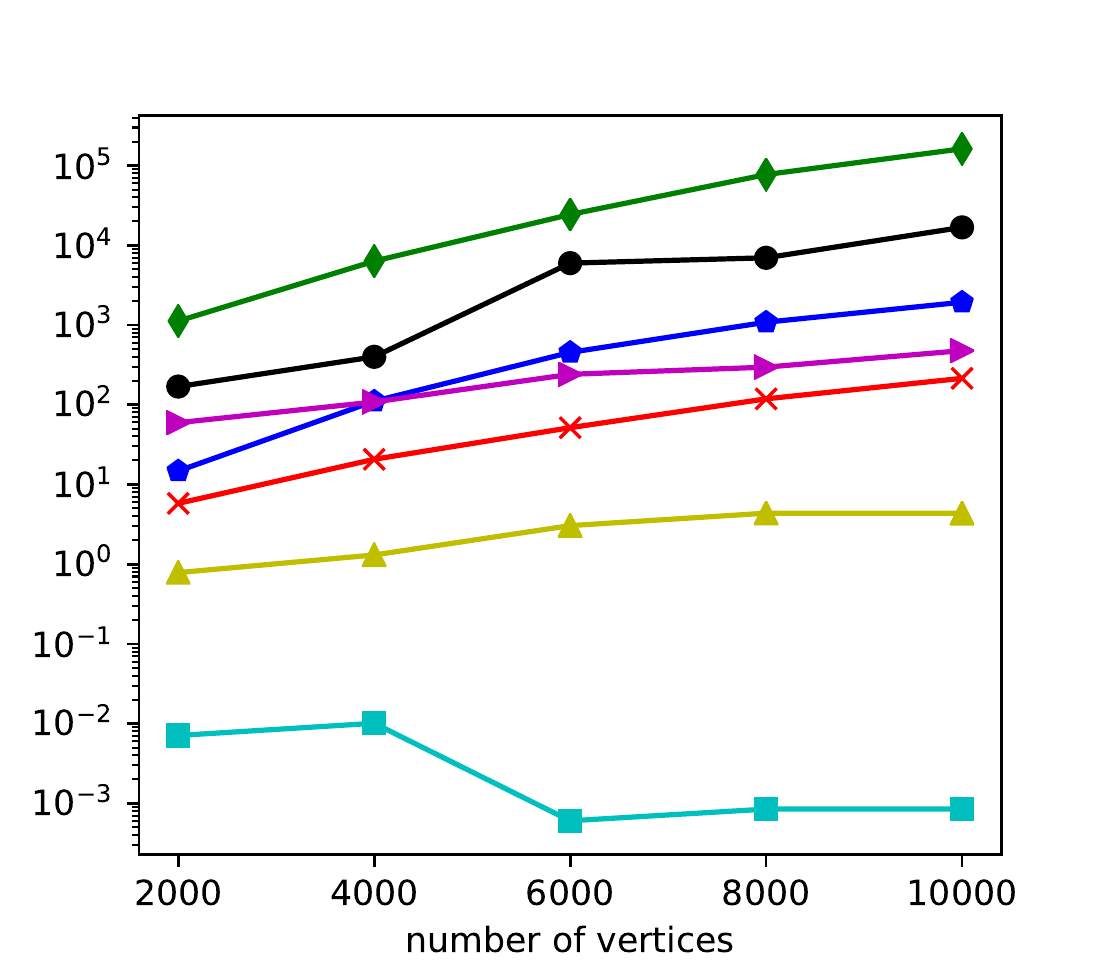}
	\caption{}
	\end{subfigure}
	\caption{Runtime experiments.}
	\label{fig:runtime}
\end{figure}

\subsection{Real-world Graphs} \label{realworld}
We conduct experiments on six real-world attributed graphs whose statistics are given in Table~\ref{tab:sta}. Their details are described in the following. For the real-world graphs, if their attributes are not numeric, i.e., categorical, we use one-hot encoding to tranform the categorical values to numeric ones.
\begin{table}[!htb]
	\centering
	\caption{The statistics of the real-world attributed graphs.}
	\label{tab:sta}
	\begin{tabular}{l|l|l|l|l}
		\toprule
		Datasets                &vertex\# &edge\#   &attribute\# &cluster\#\cite{DBLP:conf/icdm/GunnemannFRS13, ye2017attributed}\\ \hline
		\textsc{Disney}     &124           &333          & 28          &9\\
		\textsc{4Area}      & 26,144       & 108,550    & 4     &50\\ 
		\textsc{ARXIV}   & 856           & 2,660       & 30          &19\\ 
		\textsc{IMDb}     & 862            & 4,388        & 21     &30\\ 
		\textsc{Enron}    & 13,533       & 176,967    & 18     &40\\
		\textsc{Patents}    & 100,000       & 188,631    & 5     &150\\
		\bottomrule
	\end{tabular}
\end{table}

\begin{itemize}
	\item \textsc{Disney}~\cite{DBLP:conf/icdm/SanchezMLKB13}: This network is the Amazon co-purchase network of Disney movies. The network has 124 vertices and 333 edges. Vertices represent movies and edges represent their co-purchase relationships. Each movie has 28 attributes.
	\item \textsc{4Area}~\cite{DBLP:conf/kdd/PerozziASM14}: This network is a co-authorship network of computer science authors. The attributes represent the relevance scores of the publications of an author to the conferences. The categories of conferences are ``databases'', ``data mining'', ``information retrieval'', and ``machine learning''. The network has 26,114 vertices and 108,550 edges.
	\item \textsc{ARXIV}~\cite{DBLP:conf/icdm/GunnemannFRS13}: This network is a citation network whose vertices represent papers and edges represent citation relationships. Attributes denote how often a specific keyword appears in the abstract of the paper. The network has 856 vertices, 2,660 edges, and 30 attributes.
	\item \textsc{IMDb}~\cite{DBLP:conf/icdm/GunnemannFRS13}: This network is extracted from Internet Movie Database. Each vertex represents a movie with at least 200 rankings and an average ranking of at least 6.5. Two movies are connected if they have the same actors. Attributes denote 21 movie genres. The network has 862 vertices and 4,388 edges.
	\item \textsc{Enron}~\cite{DBLP:conf/icdm/SanchezMLKB13}: This network is the communication network with email transmission as edges between email addresses. Each vertex has 18 attributes which describe aggregated information about average content length, average number of recipients, or time range between two mails. The network has 13,533 vertices and 176,967 edges.
	\item \textsc{Patents} \cite{DBLP:conf/icdm/GunnemannFRS13}: This network is a citation network of patents with 100,000 vertices, 188,631 edges, and five attributes which are ``assignee code'', ``claims'', ``patent class'', ``year'' and ``country''.
\end{itemize} 

Since the real-world graphs do not have a ground truth, we use the proposed \textsc{AU}, \textsc{GU}, and \textsc{Compactness} as cluster quality measures. The lower the scores of these three measures, the higher the cluster quality. In addition to these three measures, we also report the \textsc{Normality}~\cite{perozzi2016scalable,perozzi2018discovering} score. The \textsc{Normality} score is a generalization of Newman's modularity and assortativity~\cite{newman2006modularity,newman2002assortative} to attributed graphs. \textsc{Normality} measures both the internal consistency and external separability of an attributed graph cluster. The higher the \textsc{Normality} score, the better the cluster quality. Note that the \textsc{Normality} score can be negative. We give the average scores over 50 runs in Table~\ref{tab:obj} and Table~\ref{tab:normality}, each time with a randomly sampled vertex as the seed vertex. For SG-Pursuit, UNCut and AGC, we give them the same number of clusters as used in~\cite{DBLP:conf/icdm/GunnemannFRS13, ye2017attributed}. For each seed vertex, we first decide which cluster contains it and then compute the scores of these measures.

We can see from Table~\ref{tab:obj} that LOCLU achieves the best \textsc{AU} and \textsc{Compactness} scores on all six real-world datasets. On the dataset \textsc{Disney}, AMEN achieves the best \textsc{GU} score. Table~\ref{tab:normality} shows the \textsc{Normality} score of each method. We can see that LOCLU has the best \textsc{Normality} score on four datasets. On dataset \textsc{Disney}, UNCut has the best \textsc{Normality} score. On dataset \textsc{Patents}, FocusCO has the best \textsc{Normality} score. For case studies, we interprete the results of LOCLU and its competitor FocusCO on \textsc{Disney} and \textsc{4Area} datasets. For FocusCO, we set the entries in $\beta$ that correspond to the designated attributes to one and other entries to zero.
\begin{table*}[!htb]
	\centering
	\caption{The \textsc{AU}/\textsc{GU}/\textsc{Compactness} scores of each method on the real-world attributed graphs. N/A means the results are not available because the method: 1) is not applicable on the unconnected graphs, 2) runs out of memory, or 3) does not finish in a week.}
	\label{tab:obj}
	\begin{tabular}{l|l|l|l|l|l|l}
		\toprule
		Algorithms     &\textsc{Disney}           & \textsc{4Area}                 & \textsc{ARXIV}      &\textsc{IMDb}           &\textsc{Enron} & \textsc{Patents}\\ \hline
		LOCLU    &\textbf{0.010}/0.062/\textbf{0.072} &\textbf{0.002}/\textbf{0.009}/\textbf{0.011}         &\textbf{0}/\textbf{0.027}/\textbf{0.027}    &\textbf{0}/\textbf{0.015}/\textbf{0.015}       &\textbf{0.006}/\textbf{0.001}/\textbf{0.007} &\textbf{0}/\textbf{0.001}/\textbf{0.001}\\ 
		FocusCO         &0.012/0.080/0.092 &0.021/0.075/0.096        &0.074/0.055/0.129    &0.024/0.079/0.103 &0.088/0.001/0.089 &0.012/0.067/0.079\\ 
		SG-Pursuit      &0.088/0.087/0.167  & N/A                                   &0.075/0.074/0.149              &0.051/0.058/0.109  &0.126/0.001/0.127 & N/A   \\ 
		UNCut              &0.094/0.079/0.173  &0.175/\textbf{0.009}/0.184      &0.172/0.051/0.223 &0.148 / 0.023 / 0.171  & 0.149/0/0.149 & 0.162/0.012/0.174  \\ 
		AMEN              &0.071/\textbf{0.055}/0.126   &0.022/0.033/0.055  &N/A           &0.106/0.060/0.166              &N/A &N/A\\
		AGC                 &0.138/0.094/0.232  &0.008/0.011/0.019  &0.138/0.057/0.195  &0.132/0.029/0.161 &0.156/0.001/0.157 &0.007/0.007/0.014\\
		HK (uw)          &0.123/0.070/0.193   &0.050/0.078/0.128  &0.168/0.050/0.218 &0.143/0.019/0.162 &0.167/0.002/0.169 &0.027/0.031/0.058\\
		HK (w)            &0.123/0.070/0.193   &0.050/0.078/0.128  &0.168/0.050/0.218 &0.143/0.019/0.162 &0.167/0.002/0.169 &0.027/0.031/0.058\\ 
		\bottomrule
	\end{tabular}
\end{table*}

\begin{table*}[!htb]
	\centering
	\caption{The \textsc{Normality} score of each method on the real-world attributed graphs. N/A means the results are not available because the method: 1) is not applicable on the unconnected graphs, 2) runs out of memory, or 3) does not finish in a week.}
	\label{tab:normality}
	\begin{tabular}{l|l|l|l|l|l|l}
		\toprule
		Algorithms     &\textsc{Disney}    & \textsc{4Area}        & \textsc{ARXIV}    &\textsc{IMDb}      &\textsc{Enron} & \textsc{Patents}\\ \hline
		LOCLU    &-1.326                    &\textbf{-0.420}         &\textbf{-0.567}                 &\textbf{0.013}       &\textbf{-0.800}  &-1.001\\ 
		FocusCO         &-1.567                   &-0.760                        &-0.832                  &-0.979                  &-1.000 &\textbf{-0.920}\\ 
		SG-Pursuit      &-1.898                   &N/A                             &-0.914                  &-0.979                  &-0.999 &N/A   \\ 
		UNCut              &\textbf{-1.182}     & -1.000                        &-0.999                 &-0.996                   &-1.000 &-1.001  \\ 
		AMEN              &-2.403                  &-0.974            &N/A                   &-0.990              &N/A &N/A\\
		AGC                 &-1.235                    &-1.000                          &-0.987                &-0.998                    &-1.000 &-1.000\\
		HK (uw)          &-1.687                    &-0.780                         &-0.858                &-0.958                     &-0.926 &-1.001\\
		HK (w)            &-1.687                     &-0.780                        &-0.858                 &-0.958                    &-0.926 &-1.001\\ 
		\bottomrule
	\end{tabular}
\end{table*}

  \textbf{Disney}. \textsc{Disney} is a subgraph of the Amazon co-purchase network. Each movie (vertex) is described by 28 attributes, such as ``Average Vote'', ``Product Group'', and ``Price''. Given the seed vertex and one designated attributes ``Amazon Price'', we want to find a local cluster concentrating on this seed vertex and the designated attribute. All the 15 vertices in Figure~\ref{fig:Disney} are read-along movies that are rated as PG (Parental Guidance Suggested) and attributed as ``Action \& Adventure'', e.g., ``Spy Kids'', ``Inspector Gadget'' and ``Mighty Joe Young''. We show the local clusters detected by LOCLU and its competitor FocusCO in Figure~\ref{fig:Disney}. In Figure~\ref{fig:Disney}, the vertex in red is the given seed vertex, and the vertices in blue are the detected vertices. Figure~\ref{fig:Disney}(a) shows the local cluster detected by LOCLU. The \textsc{GU} score is 0.087 and the \textsc{AU} score is 0.110. The \textsc{Compactness} score is 0.197. The \textsc{Normality} score is -1.454. Figure~\ref{fig:Disney}(b) shows the local cluster detected by FocusCO. The \textsc{GU} score is 0.050 and the \textsc{AU} score is 0.100. The \textsc{Compactness} score is 0.150. The \textsc{Normality} score is -1.711. FocusCO is better than LOCLU if considering the \textsc{Compactness} score. LOCLU is superior to FocusCO when considering the \textsc{Normality} score.
  \begin{figure}[!htb]
  	\hspace*{\fill}
  	\centering
  	\begin{subfigure}{.22\textwidth}
  		\fbox{\includegraphics[width=\textwidth]{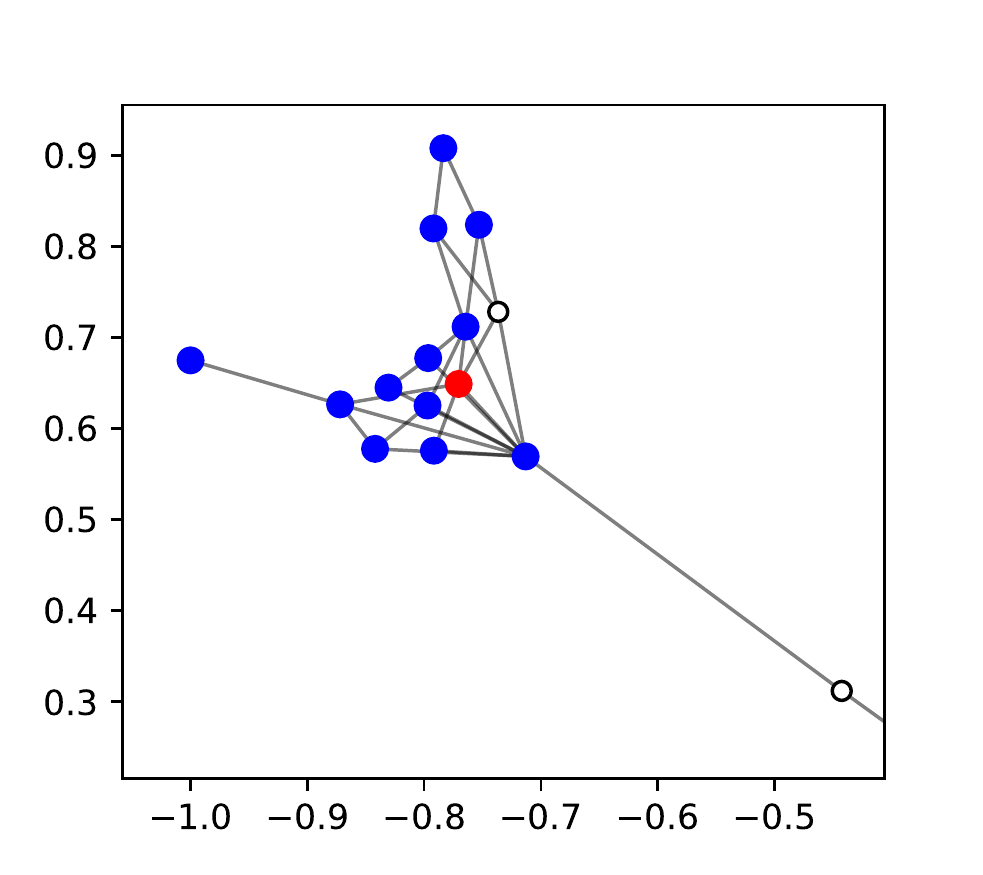}}
  		\caption{LOCLU} 
  	\end{subfigure}
  \hfill
  \hfill
  \centering
  \begin{subfigure}{.22\textwidth}
  	\fbox{\includegraphics[width=\textwidth]{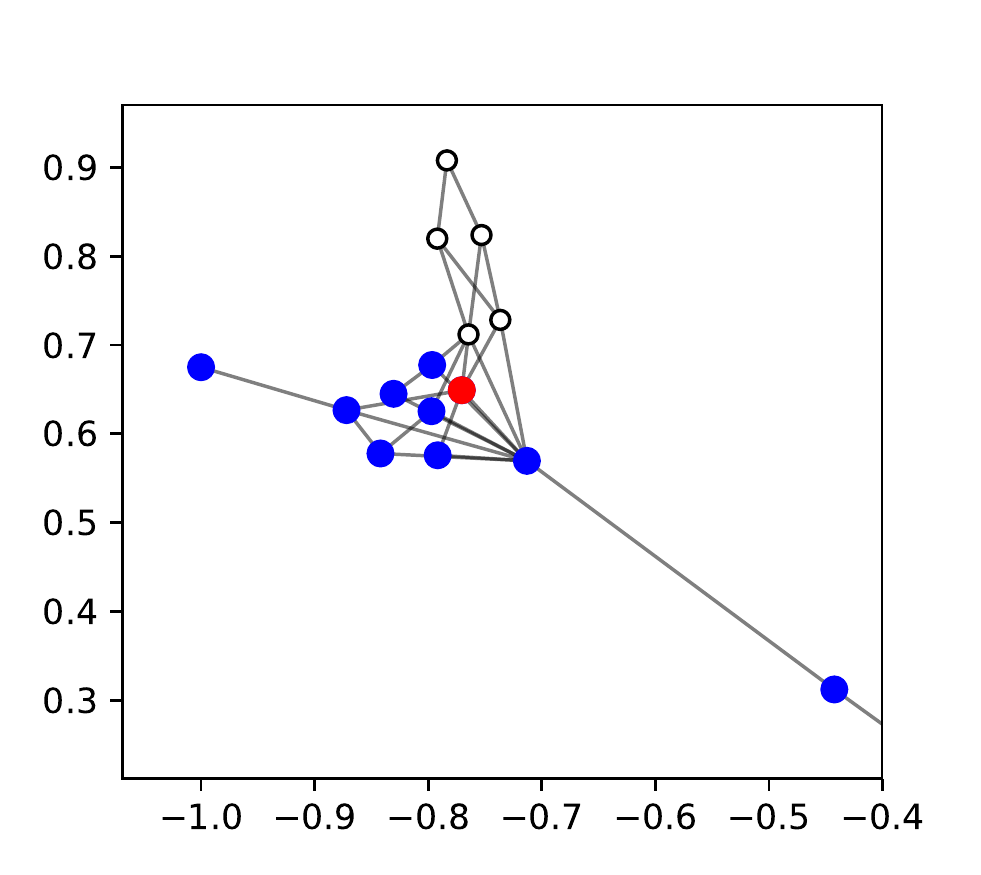}}
  	\caption{FocusCO} 
  \end{subfigure}
 \hspace*{\fill}
  	\caption{Local clusters found in the \textsc{Disney} dataset by LOCLU and FocusCO.}
  	\label{fig:Disney}
  \end{figure}

 \textbf{4Area}. \textsc{4Area} is a co-authorship network of computer science authors. The attributes represent the relevance scores of the publications of an author to the conferences ``databases'', ``data mining'', ``information retrieval'', and ``machine learning''. Given the seed vertex \textit{Jiawei Han} and two attributes ``data mining'' and ``machine learning'', we want to find a local cluster concentrating on this seed vertex and the two designated attributes. Figure~\ref{fig:4Area}(a) shows the main component of the local cluster that includes the given seed vertex (in red) and the detected vertices (in blue) by LOCLU. This subgraph has 37 authors and is unimodal in the two designated attributes ``data mining'' and ``machine learning''. This subgraph contains authors that belong to the ``data mining'' field, but not the ``machine learning'' field. It contains authors such as \textit{Jian Pei}, \textit{Philip S. Yu}, \textit{Hans-Peter Kriegel}, and \textit{Christos Faloutsos} who focus primarily on ``data Mining''. The \textsc{GU} score is 0.046 and the \textsc{AU} score is 0.052. The \textsc{Compactness} score is 0.098. The \textsc{Normality} score is -1.424. The detected cluster (shown in Figure~\ref{fig:4Area}(b)) by FocusCO has 134 authors. This local cluster is not unimodal in the designated attribute ``data mining''. The cluster contains authors such as \textit{Chu Xu} and \textit{Liping Wang} who focus primarily on ``information retrieval'', and authors such as \textit{Joseph C. Pemberton} and \textit{Zhao Xing} who focus primarily on ``machine learning''. The \textsc{GU} score is 0.018 and the \textsc{AU} score is 0.110. The \textsc{Compactness} score is 0.128. The \textsc{Normality} score is -2.000. Thus, the subgraph shown in Figure~\ref{fig:4Area}(a) has a higher quality than that shown in Figure~\ref{fig:4Area}(b).
  \begin{figure*}[!htb]
  	\hspace*{\fill}
  	\centering
  	\begin{subfigure}{.45\textwidth}
  		\fbox{\includegraphics[width=\textwidth]{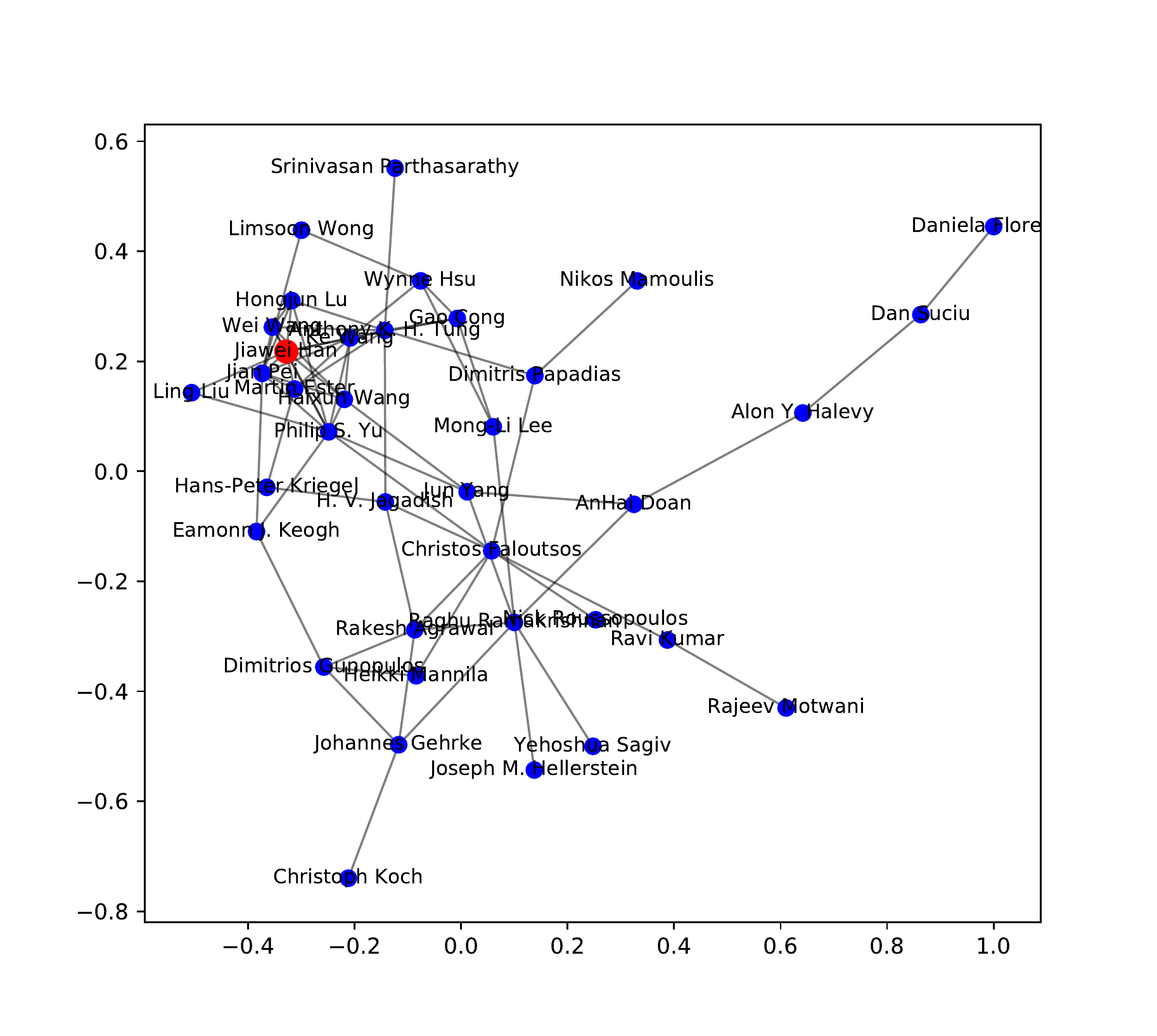}}
  		\caption{LOCLU} 
  	\end{subfigure}
  \hfill
  \hfill
  	\centering
  	\begin{subfigure}{.45\textwidth}
  		\fbox{\includegraphics[width=\textwidth]{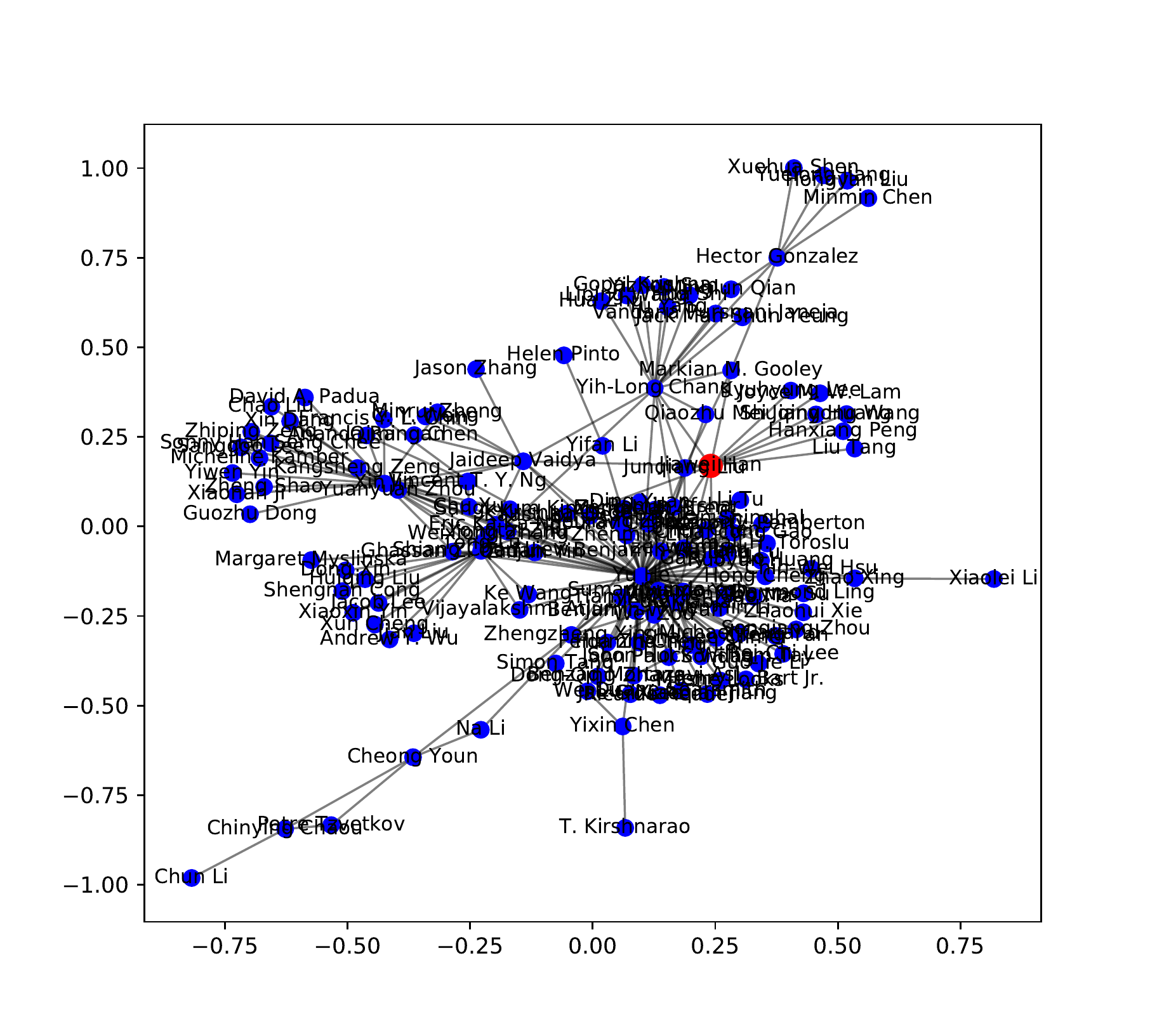}}
  		\caption{FocusCO} 
  	\end{subfigure}
   \hspace*{\fill}
  	\caption{Local clusters found in the \textsc{4Area} dataset by LOCLU and FocusCO. To reduce clutter, we only show a subgraph of \textsc{4Area}, which consists of the seed vertex and the detected vertices.}
  	\label{fig:4Area}
  \end{figure*}

\section{Related Work and Discussion}\label{relatedWork}
\subsection{Plain Graph Clustering}
Plain graphs are those graphs whose vertices have no attributes. Clustering on plain graphs has been well studied in literatures. METIS~\cite{karypis1998multilevel} and spectral clustering~\cite{shi2000normalized, ng2001spectral, ye2016fuse} are typically and widely used methods, which compute a $k$-way partitioning of a graph. METIS is a multi-constraint graph partitioning method, which are based on the multilevel graph partitioning paradigm. Spectral clustering aims to partition the graph into $k$ subgraphs such that the normalized cut criterion is minimized. Instead of optimizing the normalized cut criterion, MODULE~\cite{newman2006modularity} optimizes a quality function known as ``modularity'' over the possible divisions of a graph. The authors showed that the modularity was superior to the normalized cut criterion in the task of community detection. Markov Cluster Algorithm (MCL)~\cite{van2000cluster} is a fast and scalable graph clustering method that is based on simulation of stochastic flow in graphs. Infomap~\cite{rosvall2008maps} is an information theoretic approach that uses the probability flow of random walks on a network as a proxy for information flows and decomposes the network into modules by Minimum Description Length (MDL) principle. Attractor~\cite{shao2015community} automatically detects communities in a network by using the concept of distance dynamics, i.e., the network is treated as an adaptive dynamical system where each vertex interacts with its neighbors. CLMC (Cluster-driven Low-rank Matrix Completion)~\cite{shao2019community} performs community detection and link prediction simultaneously. It first decomposes the adjacency matrix of a graph as three additive matrices: clustering matrix, noise matrix and supplement matrix. Then, the community-structure and low-rank constraints are imposed on the clustering matrix to remove noisy edges between communities.

\subsection{Attributed Graph Clustering}
Differing from the plain graph clustering which groups vertices only taking the graph structure into account, attributed graph clustering achieves detecting clusters in which the vertices have dense edge connectivity and homogeneous attribute values. PICS~\cite{akoglu2012pics} exploits the MDL principle to automatically decide the parameters to detect meaningful and insightful patterns in attributed graphs. SA-Cluster~\cite{DBLP:journals/pvldb/ZhouCY09} first designs a unified neighborhood random walk distance to measure the vertex similarity on an augmented graph. It then uses $k$-medoids to partition the graph into clusters with cohesive intra-cluster structures and homogeneous attribute values. BAGC~\cite{xu2012model} develops a Bayesian probabilistic model for attributed graphs. Clustering on attributed graphs is transformed into a probabilistic inference problem, which is then solveld by an efficient variational method.

The above methods consider all attributes for clustering. However, the irrelevant attributes may be contradicting with the graph structure. In this case, clusters only exist in the subset (subspace) of attributes. For the subspace clustering in attributed graphs, some methods have been proposed. SSCG~\cite{DBLP:conf/icdm/GunnemannFRS13} proposes Minimum Normalized Subspace Cut and detects an individual set of relevant features for each cluster. It needs to update the subspace dependent weight matrix in every iteration, which is very time-consuming. CDE~\cite{li2018community} formulates the community detection in attributed graphs as a nonnegative matrix factorization problem. It first develops a structural embedding method for the graph structure. Then, it integrates community structure embedding matrix and vertex attribute matrix for subsequent nonnegative matrix factorization. CDE is only applicable on graphs with nonnegative vertex attributes.

UNCut~\cite{ye2017attributed} proposes unimodal normalized cut to find cohesive clusters in attributed graphs. The detected cohesive clusters have densely connected edges and have as many homogeneous (unimodal) attributes as possible. The homogeneity or unimodality of attributes is measured by the proposed unimodality compactness which also exploits Hartigans' dip test. The dip test used in UNCut is to measure the unimodality of each attribute. However, in our method LOCLU, the dip test is used to generate modal interval on which the local clustering technique is based. SG-Pursuit~\cite{chen2017generic} is a generic and efficient method for detecting subspace clusters in attributed graphs. The main idea is to iteratively identify the intermediate solution that is close-to-optimal and then project it to the feasible space defined by the topological and sparsity constraints. SG-Pursuit needs to specify the parameters such as the maximum number of vertices in the subspace cluster and the maximum size of selected features which are difficult to set in the real-world datasets. 

Recently, deep learning techniques are adopted for attributed graph clustering. DAEGC~\cite{wang2019attributed} develops a graph attention-based autoencoder to effectively integrate both structure and attribute information for deep latent representation learning. Furthermore, soft labels for the graph representation are generated to supervise a self-training clustering process. The graph representation and self-training processes are unified in one framework. AGC~\cite{zhang2019attributed} is an adaptive graph convolution method for attributed graph clustering. AGC first designs a $k$-order graph convolution that acts as a low-pass graph filter on vertex attributes to obtain smooth feature representations. Then, it utilizes spectral clustering to find clusters in the representation space.

Another research trend is to integrate anomaly detection into the clustering process. AMEN~\cite{perozzi2016scalable,perozzi2018discovering} proposes  a new quality measure called \textsc{Normality} for attributed neighborhoods, which utilizes the graph structure and attributes together to quantify both internal consistency and external separability. \textsc{Normality} is inspired by Newman's modularity and assortativity~\cite{newman2006modularity,newman2002assortative}. Then, a  community and anomaly detection algorithm that uses \textsc{Normality} is proposed to extract communities and anomalies in attributed graphs. Each community is assigned with a few characterizing attributes. PAICAN~\cite{bojchevski2018bayesian} is a probabilistic generative model that jointly models the attribute and graph space, as well as the latent group assignments and anomaly detection. All the methods discussed above need to partition the whole graph structure to find clusters and cannot incorporate user's preference into clustering. 

\subsection{Semi-supervised Graph Clustering}
In many applications, people may be only interested in finding clusters near a target local region in the graph. The methods for plain graph and attributed graph clustering cannot be applied in such a scenario. Several recent methods~\cite{andersen2006communities, leskovec2008statistical, yuchenMRW} focus on using short random walks starting from a small seed set of vertices to find local clusters. There are also some proposals focusing on using the graph diffusion methods to find local clusters, such as PPR~\cite{andersen2006local}, HK~\cite{kloster2014heat}, PGDc~\cite{van2016local}, HOSPLOC~\cite{zhou2017local}, and MAPPR~\cite{yin2017local}. 
PPR~\cite{andersen2006local} is an approximate method to compute the personalized PageRank vector which is used for the local graph partitioning. HK~\cite{kloster2014heat} is a local and deterministic method to accurately compute a heat kernel diffusion in a graph. There are also some methods based on spectral clustering and label propagation for local cluster detection, such as~\cite{he2015detecting, li2015uncovering, mahoney2012local, hansen2014semi, bagrow2005local}. 

However, all these methods are only applicable on the task of local clustering on plain graphs. To the best of our knowledge, there are only two methods focusing on the local clustering on attributed graphs. FocusCO~\cite{DBLP:conf/kdd/PerozziASM14} incorporates user's preference into graph mining and outlier detection. It identifies the relevance of vertex attributes that makes the user-provided examplar vertices similar to each other. Then it reweighs the graph edges and extracts the focused clusters. FocusCO cannot infer the projection vector if the examplar set has only one vertex. LOCLU can find a local cluster around a given seed vertex. If given a set of vertices whose designated attribute values follow a unimodal distribution, LOCLU can also work. However, if their designated attribute values follow a multimodal distribution, LOCLU cannot find a local cluster that includes all these vertices. Like other clustering methods, LOCLU also has limitations. For example, the univariate projection for the dip test may cause information-loss in some cases. TCU-SA (Target Community Detection with User's Preference and Attribute Subspace)~\cite{liu2019target} first computes the similarities between vertices and then expand the query vertex set with its neighbors. Based on the expanded set, TCU-SA deduces the attribute subspace using an entroy method. Finally, the target community is extracted. The idea is very similar to that of FocusCO.

\subsection{Community Search}
Community search over attributed graphs in database research field is also related to our work. Given an input set of query vertices $\mathcal{V}_q$ and their corresponding attributes, find a community containing $\mathcal{V}_q$, in which vertices are densely connected and have homogeneous attributes. These methods include~\cite{huang2015approximate, huang2017attribute, fang2016effective, fang2017effective}. CTC (closest truss community)~\cite{huang2015approximate} is proposed to find a connected $k$-truss subgraph that has the largest $k$, contains $\mathcal{V}_q$, and has the minimum diameter. The problem is NP-hard and the authors develops a greedy algorithm to find a satisfied community. ATC (attribute truss community)~\cite{huang2017attribute} formulates the community search on attributed graphs as finding attributed truss communities. The detected communities are connected and close $k$-truss subgraphs which contains $\mathcal{V}_q$ and has the largest attribute relevance score proposed by the authors. ACQ (attributed community query)~\cite{fang2016effective, fang2017effective} develops the CL-tree index structure and three algorithms based on it for efficient attributed community search. The CL-tree is devised to organize the vertex attribute data in a hierarchical structure. The community search methods are only applicable on categorical attributes. The detected vertices have the same attribute values to those of the query vertices. For continuous attributes, they cannot search a community that is unimodal in the subspace that is composed of the designated attributes. In addition, they are based on dense subgraph structures, such as quasi-clique, $k$-core, or $k$-truss, which are not commonly used in graph clustering.

\section{Conclusion}\label{conclusion}
In this work, we have proposed LOCLU for incorporating user's preference into attributed graph clustering. Currently, very few methods can deal with this kind of task. To achieve the goal, we first propose a new quality measure called \textsc{Compactness} that measures the unimodality of both the graph structure and the subspace that is composed of the designated attributes of a local cluster. Then, we propose LOCLU to optimize the \textsc{Compactness} score. Empirical studies prove that our method LOCLU is superior to the state-of-the-arts. In the future, we will further explore node embeddings for attributed graphs, which seamlessly integrate information from both the attributes and graph structure.

\section*{Acknowledgment}
The authors would like to thank anonymous reviewers for their constructive and helpful comments. This work was supported partially by the U.S. National Science Foundation (grant \# IIS-1817046) and by the U.S. Army Research Laboratory and the U.S. Army Research Office (grant \# W911NF-15-1-0577).

\bibliographystyle{ACM-Reference-Format}
\bibliography{reference}

\end{document}